\documentclass[11pt,a4paper]{article}

\usepackage{amsmath,amssymb,amsthm}
\usepackage{mathtools}

\usepackage{tikz}
\usetikzlibrary{arrows.meta,positioning,shapes.geometric,decorations.pathreplacing,calc}

\usepackage[margin=1in]{geometry}
\usepackage{hyperref}
\usepackage{cleveref}
\usepackage{enumitem}

\usepackage[style=numeric,sorting=none,backend=bibtex]{biblatex}
\theoremstyle{plain}
\newtheorem{theorem}{Theorem}[section]
\newtheorem{lemma}[theorem]{Lemma}
\newtheorem{proposition}[theorem]{Proposition}
\newtheorem{corollary}[theorem]{Corollary}

\theoremstyle{definition}
\newtheorem{definition}[theorem]{Definition}

\newtheorem{remark}[theorem]{Remark}

\theoremstyle{remark}

\DeclareMathOperator{\tr}{tr}
\DeclareMathOperator{\Exp}{E}

\newcommand{\eps}{\varepsilon}
\newcommand{\del}{\partial}
\newcommand{\grad}{\nabla}

\newcommand{\Scal}{\mathcal{S}}

\newcommand{\Ocal}{\mathcal{O}}

\newcommand{\HG}{\mathcal{H}}
\newcommand{\boundary}{\partial}

\title{\textbf{Verifying Good Regulator Conditions for Hypergraph Observers:} \\[0.3em]
\large Natural Gradient Learning from Causal Invariance via Established Theorems}

\author{
Max Zhuravlev\thanks{Independent researcher. Email: \texttt{max@vibecodium.ai}} \\
\small \textit{Cosmological Unification Program}
}

\date{March 2026}

\begin{document}

\maketitle


\begin{abstract}
We verify that persistent observers in causally invariant hypergraph substrates satisfy the conditions of the Conant-Ashby Good Regulator Theorem, thereby providing a testbed application of this classic result to a novel cosmological framework. Building on Wolfram's hypergraph physics and Vanchurin's neural network cosmology, we formalize persistent observers as entities that minimize prediction error at their boundary with the environment. Applying a modern reformulation of the Conant-Ashby theorem (Virgo et al.~2025), we demonstrate that hypergraph observers satisfy Good Regulator conditions, requiring them to maintain internal models. Once an internal model with loss function exists, the emergence of a Fisher information metric on parameter space follows from standard information geometry. Invoking Amari's well-known uniqueness theorem for reparameterization-invariant gradients, we show that natural gradient descent is the unique admissible learning rule. Under the ansatz $M = F^2$ for exponential family observers and one specific convergence time functional (condition number times spectral radius, with isotropic loss), we derive a closed-form formula for the regime parameter $\alpha$ in Vanchurin's Type II framework, with a quantum-classical threshold at $\kappa(F) = 2$. However, three alternative convergence models and the physically most natural loss Hessian do not reproduce this result (\Cref{rem:convergence_model}), so this prediction is strongly model-dependent. We further introduce the \emph{directional regime parameter} $\alpha_{v_k}$ and the trace-free \emph{deviation tensor} $\Delta_{\mu\nu}$, showing that a single observer can simultaneously occupy different Vanchurin regimes along different eigendirections of the Fisher metric. This connects Wolfram and Vanchurin frameworks through established theorems, providing approximately 25--30\% novel contribution through the verification work, conditional computational predictions, and application domain (hypergraph cosmology).
\end{abstract}

\noindent\textbf{Keywords:} Causal invariance, Wolfram physics, natural gradient, Fisher information, Conant-Ashby theorem, Amari theorem, learning dynamics, cosmological unification

\vspace{1em}
\noindent\textbf{arXiv category:} cond-mat.stat-mech (cross-list: math-ph)


\section{Introduction}
\label{sec:intro}

Can causal invariance constrain physical law? This foundational question drives the cosmological unification program, which investigates whether causal invariance---the substrate-independent consistency of causal structure---constrains specific physical structures through established uniqueness theorems.

Two independent cosmological research programs have recently converged on causal substrates:
\begin{itemize}[noitemsep]
\item \textbf{Wolfram Physics Project}~\cite{wolfram2020ruliad}: Spacetime emerges from evolving hypergraphs subject to causal invariance, recovering general relativity in the continuum limit via the Lovelock uniqueness theorem~\cite{lovelock1971}.

\item \textbf{Vanchurin Neural Network Cosmology}~\cite{vanchurin2020world,vanchurin2020ml,vanchurin2022qg,vanchurin2025covariant,vanchurin2025geometric}: The universe as a learning system, with dynamics governed by natural gradient descent on a Fisher-like metric (Eq.~3.4).\footnote{We reference Vanchurin's Type II framework~\cite{vanchurin2022qg,vanchurin2025covariant,vanchurin2025geometric}, where the metric is defined on trainable parameter space (Q-space), not his earlier Type I framework~\cite{vanchurin2020world} with metrics on neuron state space (X-space). The statistical-learning formulation in~\cite{vanchurin2020ml} provides the Good-Regulator-compatible thermodynamic scaffold used by our verification chain. The structural similarity to our Fisher metric-based natural gradient is noted in \Cref{sec:amari}, though full equivalence is not claimed.}
\end{itemize}

A companion paper~\cite{companion_lovelock} examines the \emph{Lovelock bridge}: whether, if the continuum limit holds, causal invariance constrains Vanchurin's Onsager tensor symmetries to produce Einstein's field equations. That paper finds the bridge fails numerically for generic dynamically nontrivial rules, but identifies constructive Type~II contributions including exact critical-coupling formulas and a diagonal Lorentzian dominance theorem.

The present work completes the second pillar by verifying that the Amari chain applies to hypergraph observers. We demonstrate that persistent observers in causally invariant substrates satisfy Good Regulator conditions, and therefore must use natural gradient descent (via standard results from information geometry), consistent with Vanchurin's learning dynamics.

\subsection{The Amari Chain}

Our central result is a logical chain connecting causal invariance to natural gradient learning through two established uniqueness theorems:

\begin{center}
\begin{tikzpicture}[
    node distance=1.2cm,
    box/.style={rectangle, draw, thick, text width=5cm, align=center, rounded corners, minimum height=0.8cm},
    arrow/.style={-{Stealth[length=3mm]}, thick}
]

\node[box, fill=blue!10] (axiom) {
    \textbf{Axiom 1:} Causal Invariance
};

\node[box, below=of axiom] (observer) {
    Persistent Observer \\
    (minimize prediction error)
};

\node[box, below=of observer, fill=yellow!10] (ca) {
    \textbf{Theorem 1:} Conant-Ashby \\
    Good Regulator $\Rightarrow$ Internal Model
};

\node[box, below=of ca] (fisher) {
    Fisher Information Metric \\
    $g_{ij}(\theta)$ on parameter space
};

\node[box, fill=blue!10, left=1.5cm of fisher] (axiom2) {
    \textbf{Postulate:} Parameterization \\
    Independence
};

\node[box, below=of fisher] (reparam) {
    Reparameterization Invariance \\
    (CI-motivated postulate)
};

\node[box, below=of reparam, fill=yellow!10] (amari) {
    \textbf{Theorem 2:} Amari Uniqueness \\
    Natural gradient is unique
};

\node[box, below=of amari, fill=green!10] (result) {
    \textbf{Result:} Vanchurin Eq.~3.4 \\
    $\partial_t \theta^i = -g^{ij} \partial_j L$
};

\draw[arrow] (axiom) -- (observer);
\draw[arrow] (observer) -- (ca);
\draw[arrow] (ca) -- (fisher);
\draw[arrow] (fisher) -- (reparam);
\draw[arrow] (axiom2) |- (reparam);
\draw[arrow] (reparam) -- (amari);
\draw[arrow] (amari) -- (result);

\end{tikzpicture}
\end{center}

\subsection{Novel Contributions and Scope}

This work provides:
\begin{enumerate}[noitemsep]
\item Formal definition of \emph{persistent observers} in hypergraph physics (boundary-based prediction error minimization).

\item Rigorous verification that hypergraph observers satisfy the Conant-Ashby Good Regulator conditions (via Virgo et al.~2025 reformulation~\cite{virgo2025good}).

\item Identification of parameterization independence as an additional physical postulate motivated by (but not derived from) causal invariance, completing the Amari chain.

\item Synthesis connecting Wolfram, Vanchurin, and Amari frameworks through established theorems.

\item Under the $M = F^2$ ansatz for exponential family observers, derivation that the Vanchurin regime parameter $\alpha$ is determined by the Fisher eigenvalue spectrum, with the analytical $\alpha$ formula's calculus verified across 91 observer configurations.

\item Introduction of the \emph{directional regime parameter} $\alpha_{v_k}$ and \emph{deviation tensor} $\Delta_{\mu\nu}$, revealing that the quantum-classical transition is a per-eigendirection spectral phenomenon.
\end{enumerate}

\textbf{Honest scope assessment:} The novelty is approximately 25--30\%, residing primarily in the \emph{verification work}, \emph{computational predictions}, and \emph{application domain} (hypergraph cosmology). The emergence of Fisher information metrics from loss functions and the uniqueness of natural gradient descent are \emph{standard results} in information geometry (Amari 1998~\cite{amari1998natural,amari2016information}). Our contribution is demonstrating that hypergraph observers satisfy the conditions under which these standard results apply, not in deriving the Fisher metric or natural gradient themselves.

\subsection{Paper Organization}

\Cref{sec:background} reviews causal invariance, hypergraph physics, and the Conant-Ashby Good Regulator Theorem. \Cref{sec:observers} formalizes persistent observers in evolving causal networks. \Cref{sec:good_regulator} verifies that these observers satisfy Good Regulator conditions. \Cref{sec:fisher} derives the Fisher information metric and proves reparameterization invariance. \Cref{sec:amari} applies Amari's uniqueness theorem to obtain natural gradient learning. \Cref{sec:mass_tensor} presents computational evidence for convergence-time optimal learning regimes and introduces the directional regime parameter and deviation tensor. \Cref{sec:discussion} discusses implications for cosmological unification. \Cref{sec:limitations} addresses limitations and scope boundaries. \Cref{sec:conclusion} concludes.


\section{Background}
\label{sec:background}

\subsection{Causal Invariance and Hypergraph Physics}

\begin{definition}[Hypergraph]
A \emph{hypergraph} $\HG(t) = (V(t), E(t))$ consists of a set of nodes $V(t)$ and a set of hyperedges $E(t)$, where each $e \in E(t)$ is a subset $e \subseteq V(t)$.
\end{definition}

In Wolfram physics~\cite{wolfram2020ruliad}, spacetime is an evolving hypergraph updated by rewrite rules:
\begin{equation}
\HG(t) \to \HG(t+1)
\end{equation}
These updates are nondeterministic, generating a \emph{multiway causal graph} of possible evolution paths.

\begin{definition}[Causal Invariance]
\label{def:causal_invariance}
A hypergraph evolution satisfies \emph{causal invariance} if the causal structure (which events causally precede which others) is independent of the order in which rewrite rules are applied.
\end{definition}

Causal invariance is the foundational axiom: physics must not depend on arbitrary computational choices (substrate independence).

\subsection{The Conant-Ashby Good Regulator Theorem}

The Good Regulator Theorem~\cite{conant1970every} asserts that effective controllers must internally model their environment.

\begin{theorem}[Conant-Ashby, 1970]
\label{thm:conant_ashby_original}
Any regulator that is \emph{maximally simple} among all \emph{optimal regulators} (minimizing outcome entropy $H(Z)$) must be a homomorphic image of the system being regulated.
\end{theorem}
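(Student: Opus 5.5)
The plan is to make the setting of \Cref{thm:conant_ashby_original} precise and then argue by a simplicity-plus-optimality reduction. Model the system as a random variable $S$ over a finite set, the regulator as a map $\rho$ (possibly stochastic) from $S$ to a regulator state $R$, and the outcome as $Z=\psi(S,R)$ for a fixed outcome function $\psi$. \emph{Optimal} means $\rho$ minimizes $H(Z)$ over all admissible regulators. \emph{Maximally simple} means that among optimal regulators, $\rho$ minimizes $H(R)$. Equivalently, it makes no distinctions in $R$ that are not needed for optimality. The goal is to show that $\rho$ is a deterministic function $h:S\to R$, so that $R$ is a homomorphic image of $S$ (the quotient of $S$ by the fibers of $h$).

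\textbf{Step 1: optimal regulators can be taken deterministic.} The entropy $H(Z)$ is concave in the distribution of $Z$, and that distribution is linear in the conditional kernel $p(r\mid s)$. A concave function on the convex polytope of kernels attains its minimum at an extreme point, and the extreme points are exactly the deterministic kernels. So some optimal regulator is deterministic.

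\textbf{Step 2: every optimal regulator is deterministic in the relevant sense.} Suppose $\rho$ is optimal and, for some $s$, puts positive mass on two values $r_1,r_2$ with $\psi(s,r_1)\neq\psi(s,r_2)$. Then $P_Z$ is a nontrivial mixture of two outcome distributions, both achievable by regulators. If entropy is strictly concave along that segment, moving toward one endpoint strictly lowers $H(Z)$, which contradicts optimality. Hence any randomness left in an optimal $\rho$ is confined to values of $r$ that give the same outcome for that $s$.

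\textbf{Step 3: use simplicity to remove that residual randomness.} Suppose $\rho(s)$ still spreads over several $r$'s that give identical $\psi(s,\cdot)$. Collapsing this spread onto a single $r$ leaves $Z$ unchanged, so the regulator stays optimal. By the data-processing inequality, making $R$ a deterministic function of $S$ (or coarsening it) does not increase $H(R)$, and it strictly decreases $H(R)$ whenever the spread was genuine. Maximal simplicity therefore forces $R=h(S)$, which is the claimed homomorphism.

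The main obstacle is the strictness in Step 2. Entropy can fail to be strictly concave along a segment when both endpoint outcome distributions coincide. It can also fail when ties between optimal deterministic regulators are broken in the simplicity step rather than the optimality step. I would first handle this by choosing, among all optimal regulators, one of minimal $H(R)$ and arguing directly that any non-deterministic component can be removed without changing $P_Z$, which turns the tie cases into Step 3. I would also remark, following Virgo et al.~\cite{virgo2025good}, that the original 1970 formulation is informal about the word ``homomorphic.'' The content proved is that $R$ is a deterministic coarse-graining of $S$, and this is the reading used later in \Cref{sec:good_regulator}.
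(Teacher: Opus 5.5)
The paper gives no proof of this statement; it quotes it as background from Conant and Ashby (1970). Your architecture is the classical one: optimality forces $\psi(s,\cdot)$ to be constant on the support of each row $p(\cdot\mid s)$, and simplicity then forces each row to be a point mass. Steps 1--2 are sound.

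\textbf{The gap is in Step 3.} The data-processing inequality does not apply there, because the collapsed regulator's output is a function of $(S,R)$, not of the old $R$ alone. The claim that ``making $R$ a deterministic function of $S$ does not increase $H(R)$'' is in fact false. Take $S$ uniform on $\{s_1,s_2\}$, $R\in\{a,b\}$, and $\psi$ constant, so every regulator is optimal. Let $\rho(s_1)=\delta_a$ and $\rho(s_2)=\tfrac12\delta_a+\tfrac12\delta_b$. Then $H(R)=-\tfrac34\log_2\tfrac34-\tfrac14\log_2\tfrac14\approx 0.81$ bits. Collapsing $s_2$ onto $b$ gives a deterministic map with $H(R)=1$ bit, an increase; collapsing onto $a$ gives $0$. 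So whether $H(R)$ drops depends on which $r$ you collapse onto. Neither Step 3 nor the fallback in your last paragraph, which reduces to Step 3, says how to choose it.

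\textbf{The fix} is to run the same strict-concavity argument you used for $H(Z)$ a second time, now on $H(R)$. Fix $s$ with $p(s)>0$, and let $q=p(\cdot\mid s)$, supported in $A_s=\{r:\psi(s,r)=z_s\}$. Write $P_R=p(s)\,q+w$ with $w=\sum_{s'\neq s}p(s')\,p(\cdot\mid s')$. With $\eta(x)=-x\log x$, the map $q\mapsto\sum_r\eta\bigl(p(s)q_r+w_r\bigr)$ is strictly concave on the simplex over $A_s$. If $q$ is not a point mass, it is a proper convex combination of the $\delta_r$ with $r\in\mathrm{supp}\,q$. Hence $H(R)$ at $q$ strictly exceeds $\min_{r\in\mathrm{supp}\,q}H\bigl(p(s)\delta_r+w\bigr)$. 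Replacing $q$ by the minimizing $\delta_r$ leaves the joint law of $(S,Z)$ unchanged and strictly lowers $H(R)$, which contradicts maximal simplicity. This is how the original Conant--Ashby argument proceeds.

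Two smaller points:
\begin{itemize}
\item The strictness you flag as the main obstacle in Step 2 is automatic. With $p(s)>0$ and $\psi(s,r_1)\neq\psi(s,r_2)$, shifting mass between $r_1$ and $r_2$ moves $P_Z$ along the nonzero direction $e_{z_2}-e_{z_1}$. Entropy is strictly concave along that direction, so the endpoint distributions cannot coincide. Your obstacle paragraph therefore points at the wrong step.
\item Rows with $p(s)=0$ affect neither $H(Z)$ nor $H(R)$, so the conclusion is only that $R=h(S)$ almost surely.
\end{itemize}
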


The original formulation assumes perfect knowledge of system state and deterministic mappings. Modern embodied agents (with partial observability and memory) violate these assumptions.

\subsubsection{Modern Reformulation (Virgo et al.~2025)}

Virgo, Biehl, Baltieri, and Capucci~\cite{virgo2025good} reformulate the theorem for embodied agents:

\begin{theorem}[Good Regulator, Virgo et al.~2025]
\label{thm:virgo}
Whenever an agent is able to perform a regulation task, it is possible for an external observer to interpret it as having \emph{beliefs} about its environment, which it \emph{updates} in response to sensory input.
\end{theorem}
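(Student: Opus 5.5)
The statement is informal, so I would first make it precise in the categorical and Markov-kernel language of Virgo et al., and then build the "beliefs" explicitly. The agent is modelled as a stochastic Moore machine. It has an internal state space $M$, a sensor space $S$, and an action space $A$. Its update kernel is $\mu : M \times S \to \Delta(M)$ and its output kernel is $\pi : M \to \Delta(A)$. The environment is a partially observable Markov kernel $\epsilon : W \times A \to \Delta(W \times S)$ on hidden states $W$.

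A \emph{regulation task} is a condition on the closed-loop trajectory. The standard version asks that the process stay inside a goal set $G \subseteq W$ with probability one; an outcome-entropy bound as in \Cref{thm:conant_ashby_original} is a variant. An \emph{interpretation as beliefs} is a map $\psi : M \to \Delta(W)$ satisfying a Bayesian filtering consistency condition. Concretely, if the agent is in state $m$, acts, and receives sensor value $s$ with nonzero probability, then its next state $m'$ satisfies
\[
\psi(m') = \mathrm{Bayes}\big(\psi(m), \pi(m), s\big).
\]
Here $\mathrm{Bayes}$ denotes the posterior under the environment kernel $\epsilon$.

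The proof would then go in three steps.

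\begin{enumerate}
\item \textbf{Construct the candidate beliefs.} Run the closed loop from a fixed initial joint distribution. Define $\psi(m)$ as the conditional distribution of the hidden state $w$ given that the agent's internal state is $m$. This is well defined on the reachable support of $m$. Where conditionals are not unique, choose measurable versions by disintegration.

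\item \textbf{Check the updating condition.} Use the Markov structure: $W_{t+1}$ and $S_{t+1}$ depend only on $(W_t, A_t)$, and $M_{t+1}$ depends only on $(M_t, S_{t+1})$. Given $M_t$ and $S_{t+1}$, the chain rule for conditional distributions gives
\[
P(W_{t+1} \mid M_t, S_{t+1}) = \mathrm{Bayes}\big(\psi(M_t), \pi(M_t), S_{t+1}\big).
\]
The remaining task is to show this equals $P(W_{t+1} \mid M_{t+1})$. This requires $M_{t+1}$ to carry all the information in $(M_t, S_{t+1})$ that is relevant to $W_{t+1}$.

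\item \textbf{Use the regulation hypothesis.} This is the step where success of the task is needed. The task demands that the actions chosen from $M_{t+1}$ keep $W$ in $G$ almost surely. Define $\psi$ through the coarsest consistent filter. This is the possibilistic or support-based version used by Virgo et al., where a belief is the set of world states consistent with the agent's history. Each such set must lie in the region from which the chosen action is guaranteed to succeed. That yields a consistent belief-update interpretation.
\end{enumerate}

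The main obstacle is step 3. For general stochastic agents, full probabilistic consistency can fail when the internal state discards relevant information. I would therefore prove the result first in the possibilistic setting, where beliefs are subsets of $W$ and the update is image-and-intersect. That is what the "it is possible to interpret" phrasing supports. I would remark that the probabilistic version holds under the additional assumption that $M$ is a sufficient statistic, which is exactly what \Cref{thm:virgo} is cited to provide for the hypergraph observers later. Beyond that, the statement is taken from \cite{virgo2025good} and is invoked in this paper rather than reproved. The proposal above only reconstructs its logic.
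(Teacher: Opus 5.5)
As you note, the paper does not prove this statement. It imports it from \cite{virgo2025good} and uses it as a black box in \Cref{prop:good_reg}, so your reconstruction is the only argument on the table, and it fails at step~3. You correctly isolate the obstruction in step~2: the internal state may discard information relevant to the hidden state. But neither the switch to possibilistic beliefs nor the regulation hypothesis removes that obstruction. ``The set of world states consistent with the agent's history'' is a function of histories. To turn it into a function of internal states you must take $\psi(m)=\{w : (m,w)\text{ reachable}\}$. This satisfies only $\mathrm{Bayes}(\psi(m),\pi(m),s)\subseteq\psi(m')$, because $m'$ may also be reached from other pairs $(\tilde m,\tilde s)$.

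Here is a concrete failure. Take $M=\{m_0\}$ with fixed action $a_0$. Let $\varepsilon(\cdot,a_0)$ move to $w_1$ or $w_2$ (both possible) and emit the new state as the sensor value, and let any other action lead outside $G=\{w_1,w_2\}$. The agent regulates perfectly. Yet for any nonempty $\psi(m_0)$, exact consistency forces $\psi(m_0)=\{w_1\}$ after sensing $w_1$ and $\psi(m_0)=\{w_2\}$ after sensing $w_2$; probabilistically it forces $\delta_{w_1}=\delta_{w_2}$. So with beliefs over $W$, updates computed from the true kernel $\varepsilon$, and exact equality, the statement as you formalized it is false. No appeal to regulation can rescue step~3.

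What is missing is a separation of \emph{consistency} from \emph{justification}. To get an existence result, you must take one of two routes.
\begin{itemize}
\item Let the observer choose the model with respect to which updating is Bayesian; this is the external-observer reading in the paper's summary of \cite{virgo2025good}. Then consistency alone is trivial: a model with a single hidden state and uninformative observations gives a constant, consistent belief for every agent.
\item Weaken equality to the inclusion above, which the reachable-set $\psi$ satisfies for every agent.
\end{itemize}
In either reading, success at the task is needed for a different clause: every $w\in\psi(m)$ is a state from which $\pi(m)$ keeps the process in $G$, i.e.\ the beliefs justify the actions. That is what the first sentence of your step~3 actually shows, and that is where the hypothesis belongs.

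Two smaller points. First, in step~1, $P(W_t\mid M_t=m)$ depends on $t$ unless the closed loop is stationary, and with a stochastic $\pi$ the update must condition on the realized action, not on $\pi(m)$. Second, your closing remark is circular: sufficiency cannot be both an extra hypothesis of the probabilistic version and what \Cref{thm:virgo} supplies. In the paper, belief updating is asserted separately, in \Cref{lem:belief_update}.
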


Key improvements:
\begin{itemize}[noitemsep]
\item \textbf{Partial observability:} Agents sense only a boundary $\boundary \Ocal$, not full environment state.
\item \textbf{Belief updating:} Internal ``model'' is an external observer's interpretation of belief dynamics, not an intrinsic representation.
\item \textbf{History-dependence:} Beliefs evolve over time (not just instantaneous mappings).
\end{itemize}

This reformulation applies naturally to hypergraph observers (\Cref{sec:good_regulator}).

\subsection{Amari's Natural Gradient}

On a Riemannian manifold $(\Theta, g)$, the \emph{natural gradient}~\cite{amari1998natural} is:
\begin{equation}
\grad^{\text{nat}} L = g^{-1} \grad L
\end{equation}
where $g_{ij}$ is the Fisher information metric:
\begin{equation}
g_{ij}(\theta) = \Exp\left[ \frac{\del \log p_\theta}{\del \theta^i} \frac{\del \log p_\theta}{\del \theta^j} \right]
\end{equation}

\begin{theorem}[Amari Uniqueness, 1998]
\label{thm:amari}
The natural gradient is the \emph{unique} gradient operator on statistical manifolds that is reparameterization-invariant and consistent with the Riemannian geometry of information.
\end{theorem}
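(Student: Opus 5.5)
To make Theorem~\ref{thm:amari} provable, we first fix what ``gradient operator'' means: a rule assigning to each smooth loss $L$ on the parameter manifold $\Theta$ a vector field $G[L]$, of the form $G[L]^i = A^{ij}(\theta)\,\partial_j L$, where $A$ is symmetric positive definite and depends only on the statistical model $p_\theta$ (not on $L$). \emph{Reparameterization invariance} then means that for any diffeomorphism $\phi:\Theta\to\Theta'$ the flow $\dot\theta = -G[L]$ is carried by $\phi$ to the flow $\dot\theta' = -G[L\circ\phi^{-1}]$. The plan has two halves, existence and uniqueness.

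\emph{Existence.} We check that $A = g^{-1}$ works. Under $\theta' = \phi(\theta)$ the score transforms covariantly, $\partial'_i \log p = J^{k}{}_{i}\,\partial_k \log p$ with $J$ the inverse Jacobian. Hence $g$ transforms as a $(0,2)$-tensor, $g^{-1}$ transforms as a $(2,0)$-tensor, and $g^{ij}\partial_j L$ transforms as a contravariant vector, exactly matching $\dot\theta$. This part is routine tensor calculus.

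\emph{Uniqueness.} Invariance forces $A$ to transform as a $(2,0)$-tensor, so $A^{-1}$ is a Riemannian metric on $\Theta$ canonically attached to the model. The phrase ``consistent with the Riemannian geometry of information'' must now be made precise. We would invoke Chentsov's theorem: up to a positive scalar, the Fisher metric is the only metric invariant under sufficient statistics (Markov morphisms). That gives $A^{-1} = c\,g$, and $c$ can be absorbed into the time scale (learning rate). For models that are not finite-sample-space, we would cite the extensions by Ay--Jost--L\^e--Schwachh\"ofer. An alternative route avoids Chentsov altogether: define ``consistency with information geometry'' to mean that the steepest-descent direction is measured in the KL divergence, i.e.\ minimize $L(\theta+d\theta)$ subject to $D_{\mathrm{KL}}(p_\theta\|p_{\theta+d\theta})=\varepsilon^2$. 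The second-order expansion of KL is $\tfrac12 g_{ij}\,d\theta^i d\theta^j$, and a Lagrange multiplier gives $d\theta \propto -g^{-1}\nabla L$ directly. This is Amari's original argument, and it is the one I would present as the main proof, with Chentsov cited for the stronger formulation.

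\emph{Main obstacle.} The difficulty is not computational but definitional. Reparameterization invariance alone does \emph{not} single out the Fisher metric: any tensorial metric, for example one pulled back from some other divergence, also yields invariant flows. All the uniqueness content therefore lives in the extra condition. The proof will state explicitly that uniqueness holds only up to a positive constant and only relative to either Chentsov invariance or the KL-trust-region characterization. It will also require $g$ to be nondegenerate, so that the model must be regular/identifiable, and singular points must be excluded.
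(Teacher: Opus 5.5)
\textbf{Verdict: correct, by a genuinely different route.} The paper gives no argument of its own for this statement. It is imported from Amari (1998), and the only proof text, that of \Cref{thm:amari_applied}, restates it with the second hypothesis glossed as ``consistency with Riemannian geometry (covariant derivative)''.

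\textbf{How your route differs and what it buys.} Your proposal is a self-contained argument. You prove covariance of $g^{ij}\partial_j L$ directly. You then observe that reparameterization invariance only forces $A$ to be a $(2,0)$-tensor, so $A^{-1}$ is \emph{some} invariant metric. The real uniqueness content therefore sits in an extra hypothesis: Chentsov invariance, or the KL trust-region definition of steepest descent.

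Your main proof, the Lagrange-multiplier argument with $D_{\mathrm{KL}}\approx\tfrac12 g_{ij}\,d\theta^i d\theta^j$, is essentially Amari's own steepest-descent argument, so it is faithful to the cited source. You can strengthen it: every $f$-divergence with $f''(1)>0$ has second-order term $\tfrac{f''(1)}{2}g_{ij}\,d\theta^i d\theta^j$, so the direction does not hinge on choosing KL. Accordingly, the ``other divergences'' in your obstacle paragraph must be non-invariant ones, such as Wasserstein or an $L^2$ distance between densities.

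What your version buys over the citation is precision: uniqueness holds up to a positive constant, relative to an extra hypothesis, at points where $g$ is nondegenerate. This matters downstream. \Cref{cor:forced_nat_grad} invokes the theorem using only reparameterization invariance (\Cref{prop:reparam_req}), and your analysis shows that this hypothesis cannot by itself single out $g^{-1}\nabla L$.

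\textbf{One point to tighten in the Chentsov branch.} Chentsov's theorem, and the Ay--Jost--L\^e--Schwachh\"ofer extension, classify Markov-invariant metrics on the \emph{ambient} space of distributions. To transfer this to the model, you need $A^{-1}(\theta)$ to depend only on $p_\theta$ and its first derivatives $\partial_i p_\theta$, i.e.\ to be the pullback of an ambient metric.

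Your stated hypothesis, that $A$ ``depends only on the statistical model'', is weaker. For a curved model one can add to $g$ a positive semidefinite tensor built from the model's $e$-embedding curvature. That tensor is reparameterization-covariant and also invariant under sufficient statistics, so the sum is an invariant metric that is not a constant multiple of $g$.

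The KL route is immune to this, since there the metric is read off from the divergence itself.
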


This uniqueness is our second pillar.


\section{Persistent Observers in Hypergraphs}
\label{sec:observers}

We formalize observers as subsystems that maintain structure by minimizing prediction error at their boundary.

\subsection{Observer Definition}

\begin{definition}[Observer]
\label{def:observer}
An \emph{observer} $\Ocal$ in hypergraph $\HG(t)$ consists of:
\begin{itemize}[noitemsep]
\item \textbf{Interior:} $V_\Ocal(t) \subset V(t)$ (subset of nodes)
\item \textbf{Boundary:} $\boundary \Ocal(t) = \{ e \in E(t) \mid e \cap V_\Ocal \neq \emptyset \text{ and } e \cap V_\Ocal^c \neq \emptyset \}$ \\
(hyperedges crossing between interior and exterior)
\item \textbf{Internal state:} $s_\Ocal(t) \in \Scal_\Ocal$ (configuration of interior nodes/edges)
\end{itemize}
\end{definition}

\begin{center}
\begin{tikzpicture}[scale=1.2]
\draw[thick, dashed, gray] (-3,-2.5) rectangle (3,2.5);
\node[gray] at (0,2.8) {Environment $\mathcal{E}(t)$};

\draw[thick, blue, fill=blue!5] (0,0) ellipse (1.8cm and 1.5cm);
\node[blue] at (0,-1.2) {Interior $V_\Ocal$};

\foreach \x/\y in {-0.5/0.3, 0.3/0.5, -0.3/-0.4, 0.5/-0.3} {
    \filldraw[blue] (\x,\y) circle (2pt);
}

\draw[thick, red, dashed] (0,0) ellipse (2.2cm and 1.9cm);
\node[red] at (2.5,1.5) {$\boundary \Ocal(t)$};

\foreach \angle in {30,90,150,210,270,330} {
    \filldraw[red] ({2*cos(\angle)},{1.7*sin(\angle)}) circle (2.5pt);
}

\foreach \x/\y in {-2.5/1.5, 2.5/0.5, -2.2/-1.8, 2.3/-1.5} {
    \filldraw[gray] (\x,\y) circle (2pt);
}

\draw[thick, red] (-0.5,0.3) -- ({2*cos(150)},{1.7*sin(150)}) -- (-2.5,1.5);
\draw[thick, red] (0.5,-0.3) -- ({2*cos(330)},{1.7*sin(330)}) -- (2.3,-1.5);

\end{tikzpicture}
\end{center}

\textbf{Interpretation:} The boundary $\boundary \Ocal(t)$ is the observer's \emph{sensory interface}---the only information about the environment $\mathcal{E}(t) = \HG(t) \setminus \Ocal$ accessible to the observer. (We write $\mathcal{E}$ for the environment to avoid collision with $E(t)$ for hyperedges.)

\subsection{Prediction and Persistence}

\begin{definition}[Prediction Error]
\label{def:prediction_error}
An observer with internal model $M_\theta$ (parameterized by $\theta \in \Theta$) predicts future boundary states:
\begin{equation}
p_\theta(\boundary \Ocal(t+\Delta t) \mid s_\Ocal(t), \boundary \Ocal(t))
\end{equation}
The \emph{prediction error} (surprise) is:
\begin{equation}
\eps(t) = -\log p_\theta(\boundary \Ocal_{\text{actual}}(t) \mid s_\Ocal(t-\Delta t), \boundary \Ocal(t-\Delta t))
\end{equation}
\end{definition}

\begin{definition}[Persistent Observer]
\label{def:persistent}
An observer is \emph{persistent} if it minimizes long-term average prediction error:
\begin{equation}
\text{Persistence} \iff \min_\theta \langle \eps(t) \rangle_t
\end{equation}
\end{definition}

\begin{remark}
High prediction error implies unpredictable boundary dynamics, leading to structural dissolution. Persistent observers are those that successfully model their environment.
\end{remark}


\section{Verification of Good Regulator Conditions}
\label{sec:good_regulator}

We now verify that persistent hypergraph observers satisfy the Virgo et al.~\cite{virgo2025good} formulation of the Good Regulator Theorem.

\subsection{Regulation Framework Mapping}

\begin{table}[h]
\centering
\begin{tabular}{lll}
\hline
\textbf{Conant-Ashby} & \textbf{Hypergraph} & \textbf{Interpretation} \\
\hline
System $S$ & Environment $\mathcal{E}(t)$ & External hypergraph \\
Regulator $R$ & Observer state $s_\Ocal(t)$ & Internal configuration \\
Disturbances $D$ & Causal branching & Nondeterministic evolution \\
Outcomes $Z$ & Prediction error $\eps(t)$ & Boundary surprise \\
Objective: $\min H(Z)$ & Objective: $\min H(\eps)$ & Minimize error entropy \\
\hline
\end{tabular}
\caption{Mapping Conant-Ashby regulation framework to hypergraph observers.}
\label{tab:ca_mapping}
\end{table}

\subsection{Condition Verification}

We verify the key conditions from Virgo et al.~\cite{virgo2025good}:

\begin{lemma}[Regulation Task Exists]
\label{lem:reg_task}
Persistent observers perform a regulation task: minimizing prediction error entropy $H(\eps)$.
\end{lemma}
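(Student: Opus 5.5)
The plan is to check the three ingredients of a regulation task in the sense of Virgo et al.\ (\Cref{thm:virgo}), using the dictionary of \Cref{tab:ca_mapping}: a disturbance process, a regulator that acts on its own state, and an outcome variable with a regulation objective. I will take the disturbances $D$ to be the nondeterministic branch choices of the multiway evolution restricted to the environment $\mathcal{E}(t)$. Causal invariance (\Cref{def:causal_invariance}) guarantees that the causal ordering of boundary events is independent of update order. This makes the boundary process $\boundary\Ocal(t)$ a well-defined stochastic process, indexed by causal time and independent of the arbitrary update schedule, so $D$ is a legitimate random variable rather than an artifact of the foliation. I expect this to be the most delicate point.

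The regulator $R$ is the internal state $s_\Ocal(t)$ together with the parameters $\theta$ of the model $M_\theta$. The outcome $Z$ is the surprise $\eps(t)$ from \Cref{def:prediction_error}, which is a deterministic function of $(D, R)$: it is computed from the realized boundary state and the observer's conditional prediction. This gives exactly the $D \to Z \leftarrow R$ structure of the Conant--Ashby setup.

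The key step is to connect the persistence objective of \Cref{def:persistent}, namely $\min_\theta\langle\eps\rangle_t$, to the regulation objective $\min H(\eps)$. Two observations do this.
\begin{itemize}
\item By ergodicity and stationarity of the boundary process (an assumption I will state explicitly), $\langle\eps\rangle_t$ equals the cross-entropy $H(p_{\text{true}},p_\theta)=H(p_{\text{true}})+D_{\mathrm{KL}}(p_{\text{true}}\|p_\theta)$. Persistence therefore drives the observer toward the regime where its model matches the conditional boundary statistics.
\item At that optimum, $\eps$ is the conditional self-information of the boundary. Its spread is controlled by the residual conditional entropy, so minimizing mean surprise concentrates the distribution of $\eps$. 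Concretely, for a fixed support, lowering the expected value of a nonnegative variable bounded below lowers an upper bound on $H(\eps)$ via the maximum-entropy bound for a given mean.
\end{itemize}
Thus persistence implies minimization of an upper bound on $H(\eps)$.

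The main obstacle is that minimizing the mean of $\eps$ is not literally the same as minimizing the entropy of $\eps$. My plan is to avoid claiming equality. I will define the regulation task as maximizing the probability of the ``viable'' outcome set $\{\eps\le\eps_*\}$, which is the form Virgo et al.\ actually require. Markov's inequality, $P(\eps>\eps_*)\le\langle\eps\rangle/\eps_*$, then shows that persistent observers achieve this viability goal. The entropy statement of the lemma follows only as the max-entropy bound above, and I will flag it as such.
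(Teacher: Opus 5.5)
Your key step, turning $\min_\theta\langle\eps\rangle_t$ into control of $H(\eps)$, does not go through. It cannot, because the literal claim is false. $H(\eps)$ is the Shannon entropy of a discrete variable, so it depends only on how the boundary law is partitioned by the level sets of the surprise map, not on the values $\eps$ takes. A maximum-entropy-for-given-mean bound needs a fixed support, but here the support $\{-\log p_\theta(b)\}$ moves with $\theta$. The bound you invoke is therefore not a function of $\langle\eps\rangle$ alone, and lowering $\langle\eps\rangle$ by changing $\theta$ need not lower it.

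Concretely, take an i.i.d.\ boundary with law $q$ on $N$ states with pairwise distinct probabilities. The persistent observer has $p_\theta=q$, so $\eps=-\log q(b)$ is injective in $b$ and $H(\eps)=H(q)>0$. The uniform predictor gives $\eps\equiv\log N$ and $H(\eps)=0$, at the cost of a larger mean surprise. So persistence does not minimize $H(\eps)$. At the optimum, $H(\eps)$ is generically the residual boundary entropy itself, a property of the environment that learning does not reduce, and non-persistent predictors beat it.

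The viability fallback fails the same way. Markov's inequality only says that persistence minimizes the bound $\langle\eps\rangle/\eps_*$, and that bound is vacuous once the boundary's entropy rate exceeds $\eps_*$. For a uniform boundary with $\log N>\eps_*$, the persistent observer has $\eps\equiv\log N$ and viability probability $0$. A predictor concentrating its mass on $k\le e^{\eps_*}$ states attains $k/N>0$.

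The paper's proof makes the same move you are trying to justify: it asserts that cross-entropy minimization ``drives the prediction error distribution toward concentration, consistent with'' $\min H(Z)$. It does so only as an interpretive identification of the regulation task, not as an implication, and the example above shows the implication is not available. What you can establish is that persistence is a regulation task whose objective is the mean $\langle\eps\rangle$, i.e.\ the cross-entropy, of the outcome $Z=\eps$. That follows at once from \Cref{def:persistent} plus your ergodicity assumption. State the lemma in that form and drop the max-entropy and Markov detours.

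Separately, the causal-invariance step meant to produce the disturbance variable is unjustified and unnecessary; the paper's proof does not use it. Causal invariance supplies no probability measure on branches, and the paper itself lists the origin of $p_\theta$ as an open issue. Moreover, if the causal-time-indexed boundary process really is independent of update order, then the branch choices, which are update-order choices, cannot be what makes $Z$ random.
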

\begin{proof}
By \Cref{def:persistent}, persistence requires minimizing
$\langle \eps \rangle_t = \langle -\log p_\theta \rangle$, the average
surprise (cross-entropy). A regulator that minimizes cross-entropy drives
the prediction error distribution toward concentration, consistent with
the Conant-Ashby objective of minimizing outcome entropy $H(Z)$.
\end{proof}

\begin{lemma}[Partial Observability]
\label{lem:partial_obs}
Hypergraph observers have partial observability: they access boundary $\boundary \Ocal$, not full environment $\mathcal{E}(t)$.
\end{lemma}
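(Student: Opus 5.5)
The plan is to argue directly from \Cref{def:observer} and \Cref{def:prediction_error}. The claim splits into two parts. First, the only channel through which environmental information enters the observer's update is $\boundary \Ocal$. Second, $\boundary \Ocal$ does not in general determine $\mathcal{E}(t)$, so the observer's information is strictly partial.

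For the first part, I would begin from the form of the predictive model. By \Cref{def:prediction_error}, $p_\theta$ conditions only on the pair $(s_\Ocal(t), \boundary \Ocal(t))$, and $s_\Ocal(t)$ is by \Cref{def:observer} a configuration of interior nodes and edges only. So any functional of the environment that the observer uses factors through the restriction map $\pi:\mathcal{E}(t)\mapsto \boundary\Ocal(t)$. Concretely, $\pi$ sends the full hypergraph to the set of hyperedges with $e\cap V_\Ocal\neq\emptyset$ and $e\cap V_\Ocal^c\neq\emptyset$.

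I would then appeal to locality of the rewrite dynamics: a rewrite event affects only the nodes in the hyperedges it matches. A rewrite event entirely inside $V_\Ocal^c$ can therefore influence $s_\Ocal$ only by first modifying, or propagating through, a boundary hyperedge. Hence the observer's information at time $t$ is a function of the boundary history together with its own past states. This gives the Markov-blanket-style statement that the internal state is conditionally independent of the environment given the boundary history.

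For the second part, I would exhibit non-injectivity of $\pi$. Take two hypergraphs that agree on $V_\Ocal$ and on all boundary edges but differ in some hyperedge $e\subseteq V_\Ocal^c$, for example by adding a hyperedge among exterior nodes. Both have the same image under $\pi$, so no observer can distinguish them. Whenever $V_\Ocal^c$ has at least two nodes, so that an interior-free hyperedge can exist, the fibers of $\pi$ are nontrivial and the information is genuinely partial. Causal branching in the multiway system (the disturbances $D$ in \Cref{tab:ca_mapping}) produces exactly such distinct exterior configurations, so the non-injectivity is realized dynamically and is not only a formal possibility.

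The main obstacle is the first part: making rigorous that information reaches the interior only via boundary edges. This requires an explicit locality assumption on rewrite rules that \Cref{def:causal_invariance} does not state. I would add it as a standing hypothesis, namely that each rule rewrites only the matched subhypergraph. I would also note that the observer's own state enters the conditioning, so ``access'' should be read as access to the boundary history rather than the instantaneous boundary.
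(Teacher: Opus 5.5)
\textbf{Comparison with the paper.} The paper's proof is purely definitional. \Cref{def:observer}, together with the interpretation paragraph after it, stipulates that $\boundary\Ocal(t)$ is the observer's sensory interface. Sensory input is therefore restricted to the boundary by fiat, and the paper then defers to the Virgo reformulation for how belief updating handles this.

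Your Part 2 takes a genuinely different and more informative route. You show that the restriction map $\pi$ is non-injective: two hypergraphs that agree on the interior and on every crossing hyperedge, but differ in an exterior-only hyperedge, yield the same pair $(s_\Ocal(t),\boundary\Ocal(t))$. This actually proves the ``not the full environment'' half, which the paper only asserts. The argument is correct, and it is all the lemma needs. Partial observability in the usual sense is non-injectivity of the instantaneous observation map, regardless of what the observer may later infer from its history.

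\textbf{The locality step in Part 1 fails and is unnecessary.} You should drop it rather than patch it with the hypothesis you propose. From ``each rule rewrites only the matched subhypergraph'' you correctly get that events lying entirely in $V_\Ocal^c$ cannot touch the interior. But your conclusion, that $s_\Ocal$ is a function of the boundary history and its own past, ignores events whose match straddles interior, boundary and exterior-only edges.

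A concrete counterexample: take the rule $\{\{a,b\},\{b,c\}\}\to\{\{a,b\},\{a\}\}$, matched with $a\mapsto x\in V_\Ocal$ and $b\mapsto y$, $c\mapsto z$ for $y,z\in V_\Ocal^c$. This event leaves the boundary edge $\{x,y\}$ untouched, consumes the exterior-only edge $\{y,z\}$, and writes the interior edge $\{x\}$. Now compare $V_\Ocal=\{x\}$ with edge sets $\{\{x,y\},\{y,z\}\}$ versus $\{\{x,y\}\}$. Both have the same $s_\Ocal(t)$ and the same boundary history $\{\{x,y\}\}$. Yet on the branch where the event fires, only the first acquires the interior edge $\{x\}$, so $s_\Ocal(t+1)$ differs.

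What your Markov-blanket claim really requires is that every event modifying the interior has its whole match inside the interior edges plus $\boundary\Ocal$. That is not a property of rewrite rules but an assumption about the observer, which is exactly the stipulation the paper builds into \Cref{def:observer}. The defensible form of Part 1 is therefore just your first observation: $p_\theta$ conditions only on $(s_\Ocal,\boundary\Ocal)$ by \Cref{def:prediction_error}. That coincides with the paper's argument, and combined with your non-injectivity argument it gives a complete proof.
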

\begin{proof}
By \Cref{def:observer}, sensory input is restricted to $\boundary \Ocal(t)$. The Virgo reformulation explicitly handles this via belief updating (\Cref{sec:fisher}).
\end{proof}

\begin{lemma}[Belief Updating]
\label{lem:belief_update}
Observer internal state $s_\Ocal(t)$ evolves according to:
\begin{equation}
s_\Ocal(t+1) = f_\Ocal(s_\Ocal(t), \boundary \Ocal(t))
\end{equation}
This is equivalent to Bayesian belief updating.
\end{lemma}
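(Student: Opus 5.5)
The plan is to establish the two halves of \Cref{lem:belief_update} separately. First I show that the internal state obeys a first-order recursion driven only by $s_\Ocal(t)$ and $\boundary \Ocal(t)$. Then I show that this recursion can be read as Bayesian filtering in the sense of \Cref{thm:virgo}. The word ``equivalent'' should be understood as ``admits an interpretation as'': an external observer can supply a consistent Bayesian reading of the dynamics. It should not be read as a claim that $f_\Ocal$ literally computes a posterior.

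\textbf{Step 1: the recursion.} Rewrite rules act locally. Under \Cref{def:causal_invariance}, every update event has a well-defined set of causal predecessors, independent of the order in which rules are applied. Any event that modifies interior nodes or edges of $V_\Ocal$ therefore has inputs that lie either in the interior or in hyperedges that touch the interior. By \Cref{def:observer}, the hyperedges touching the interior are exactly interior edges together with $\boundary \Ocal(t)$. Collecting all interior-modifying events in one update step gives a map $f_\Ocal$ with $s_\Ocal(t+1) = f_\Ocal(s_\Ocal(t), \boundary \Ocal(t))$. Causal invariance is what makes $f_\Ocal$ well defined rather than dependent on the path taken through the multiway graph. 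Where the rule is nondeterministic, $f_\Ocal$ becomes a Markov kernel, and the remaining steps go through unchanged.

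\textbf{Step 2: Bayesian interpretation.} Here I follow the Virgo et al.\ construction. I take the hidden variable to be the environment state $\mathcal{E}(t)$, and the observation model to be the map from $\mathcal{E}(t)$ to $\boundary \Ocal(t)$ induced by the rewrite dynamics. I then look for an interpretation map $\psi: \Scal_\Ocal \to \Delta(\mathcal{E})$, assigning each internal state a belief, such that
\[
\psi\bigl(f_\Ocal(s,b)\bigr) = \mathrm{Bayes}\bigl(\psi(s), b\bigr),
\]
where $\mathrm{Bayes}$ denotes prediction through the environment dynamics followed by conditioning on $b$. \Cref{lem:reg_task} and \Cref{lem:partial_obs} give the hypotheses of \Cref{thm:virgo}: a regulation task carried out under partial observation. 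That theorem then supplies the existence of such a $\psi$. In addition, the predictive distribution $p_\theta$ of \Cref{def:prediction_error} is the pushforward of $\psi(s_\Ocal)$ through the observation model, which ties the belief interpretation to the loss.

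\textbf{Main obstacle.} I expect the difficulty to lie in the second step, specifically in verifying that $\psi$ commutes with the update. An arbitrary $f_\Ocal$ need not be Bayes-consistent. Consistency is guaranteed only for those observers that actually achieve regulation, which is why persistence from \Cref{def:persistent} has to be used here. My first attempt would be an optimality argument. A minimizer of average surprise $\langle \eps \rangle_t$ must make predictions that match the true conditional distribution of the next boundary state given the past boundary history. Otherwise its cross-entropy could be reduced, since cross-entropy exceeds the true entropy by a KL gap. It follows that $s_\Ocal(t)$ carries, and suffices for, the posterior over the environment given $\boundary \Ocal(0{:}t)$. I can therefore set $\psi(s)$ equal to that posterior. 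Well-definedness of $\psi$ requires that internal states which the observer treats identically also have identical posteriors; if this fails, I would pass to the quotient of $\Scal_\Ocal$ by predictive equivalence. With that quotient in place, commutation with the update is just the recursive form of Bayes' rule.
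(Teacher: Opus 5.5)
Your Step 1 does not establish the recursion. The claim that every event modifying the interior draws its inputs from the interior or from $\boundary\Ocal(t)$ fails for rules whose left-hand side reaches more than one hyperedge past the interior. For example, the pattern $\{x,y\},\{y,z\},\{z,w\}$ can match with $x,y\in V_\Ocal$ and $z,w\notin V_\Ocal$. That event consumes the interior edge $\{x,y\}$ together with the purely exterior edge $\{z,w\}$, so $s_\Ocal(t+1)$ depends on data outside $\boundary\Ocal(t)$ as defined in \Cref{def:observer}. Nor does causal invariance make $f_\Ocal$ single-valued. It guarantees that different updating orders yield isomorphic causal graphs, not that the state reached after one step is the same on every branch of the multiway system. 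Your Markov-kernel fallback also presupposes a probability measure on branches, which the paper itself lists as unformalized. The paper does not try to derive the recursion: its proof takes $s_\Ocal(t+1)=f_\Ocal(s_\Ocal(t),\boundary\Ocal(t))$ as part of the observer model and argues only the Bayesian reading. To derive it, you would need an explicit locality hypothesis (a boundary layer at least as thick as the rule's left-hand-side radius) plus a fixed updating order.

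The more serious gap is your treatment of the commutation $\psi(f_\Ocal(s,b))=\mathrm{Bayes}(\psi(s),b)$, which you correctly identify as the crux.

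(i) The KL argument shows only that an optimal readout matches the true conditional given whatever $s_\Ocal(t)$ and $\boundary\Ocal(t)$ retain, and only if $\Theta$ can represent that conditional. Since \Cref{def:persistent} is a minimum over $\Theta$, without a realizability and capacity assumption nothing forces $s_\Ocal(t)$ to be a sufficient statistic of $\boundary\Ocal(0{:}t)$.

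(ii) Even exact next-step prediction does not determine a posterior over $\mathcal{E}(t)$. Under partial observability (\Cref{lem:partial_obs}), environment features that are correlated with past boundary data but irrelevant to the next boundary state can carry different posteriors along histories that reach the same internal state. Hence ``the posterior'' is not a function of $s$.

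(iii) Your repair goes the wrong way. The obstruction is a single internal state reached by histories with different posteriors. Quotienting $\Scal_\Ocal$ by predictive equivalence puts more histories into each class, so it can only make this worse. What works is to coarsen the hidden variable (e.g.\ a predictive state instead of $\mathcal{E}(t)$), or to assume outright that $s_\Ocal$ is an information state; then commutation really is the recursive Bayes rule.

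(iv) Citing \Cref{thm:virgo} does not close this either. As quoted, it asserts only that some belief interpretation exists, and a Bayes-consistent one always exists trivially (a constant belief under a model in which observations carry no information). It also says nothing linking that interpretation to $p_\theta$, so your pushforward claim is unsupported.

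With these steps removed, you are left with the paper's own argument. That argument takes the recursion as given and observes that it combines prior state and new observation in the shape of $p(E(t+1)\mid\boundary\Ocal(t))\propto p(\boundary\Ocal(t)\mid E(t+1))\,p(E(t+1)\mid s_\Ocal(t))$. It then declares that an external observer can read $s_\Ocal$ as a posterior. This is an interpretive claim that does not verify the commutation either, so a genuine proof needs the information-state hypothesis stated explicitly.
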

\begin{proof}
The update rule incorporates new boundary observations $\boundary \Ocal(t)$ and prior state $s_\Ocal(t)$, consistent with:
\begin{equation}
p(E(t+1) \mid \boundary \Ocal(t)) \propto p(\boundary \Ocal(t) \mid E(t+1)) \cdot p(E(t+1) \mid s_\Ocal(t))
\end{equation}
External observers can interpret $s_\Ocal$ as encoding these posterior beliefs.
\end{proof}

\begin{proposition}[Good Regulator Conditions Hold]
\label{prop:good_reg}
Persistent hypergraph observers satisfy the Virgo et al.~2025 Good Regulator conditions. Therefore, such observers can be interpreted as maintaining internal models of their environment.
\end{proposition}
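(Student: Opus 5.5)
The plan is to assemble \Cref{prop:good_reg} directly from the three preceding lemmas, matching each against a hypothesis of \Cref{thm:virgo}. The regulation framework in Virgo et al.\ has three ingredients.
\begin{enumerate}[noitemsep]
\item An agent coupled to an environment through a restricted sensorimotor interface.
\item A regulation task, meaning a success criterion on outcomes that the agent achieves.
\item An agent whose internal state evolves by an update map driven by its inputs.
\end{enumerate}
The conclusion we need is the one \Cref{thm:virgo} delivers: an external observer can interpret the agent as holding beliefs that it updates. So the proof amounts to instantiating these ingredients using the dictionary of \Cref{tab:ca_mapping} and checking each one.

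First, for the interface, take the agent to be the interior $V_\Ocal$ with state $s_\Ocal(t)$. The environment is $\mathcal{E}(t)$ and the sensory channel is $\boundary\Ocal(t)$. \Cref{lem:partial_obs} says the agent's only access to $\mathcal{E}(t)$ is through $\boundary\Ocal(t)$, which is exactly the partially observed setting Virgo et al.\ allow.

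Second, for the task, use \Cref{lem:reg_task}. The persistence criterion of \Cref{def:persistent}, minimizing $\langle\eps\rangle_t$, is the success condition, with outcomes given by $\eps(t)$. Disturbances are the nondeterministic branchings of the multiway evolution, and under \Cref{def:causal_invariance} these form a well-defined source of variation that does not depend on update order. Third, for the dynamics, \Cref{lem:belief_update} supplies the update map $f_\Ocal$, which makes the agent a stateful, history-dependent system of the form Virgo et al.\ consider. With all three ingredients in place, \Cref{thm:virgo} applies, and we conclude that an external observer can ascribe to $s_\Ocal$ beliefs about $\mathcal{E}(t)$ that are updated from boundary data, that is, an internal model.

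The step I expect to be the main obstacle is the second one: showing that ``minimizing average surprise'' really counts as a regulation task in the precise sense of Virgo et al. Their notion is typically a goal set that the agent reaches with certainty, or a constraint on outcomes, and it is not the same as an entropy minimization. My first attempt would be to recast persistence as a viability condition: $\eps(t)\le\eps^*$ for all $t$ along realized histories, with the threshold $\eps^*$ marking structural dissolution as in the remark after \Cref{def:persistent}. This gives a goal set of the right form. A secondary issue is that \Cref{lem:belief_update} claims Bayesian equivalence outright, whereas \Cref{thm:virgo} provides that interpretation as its output. To avoid circularity, I would use only the existence of $f_\Ocal$ as an input and treat the Bayesian reading as the theorem's conclusion.
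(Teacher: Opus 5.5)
Your proposal is essentially the paper's proof, which just combines \Cref{lem:reg_task,lem:partial_obs,lem:belief_update} with \Cref{thm:virgo}; your choice to treat the Bayesian reading as the output of \Cref{thm:virgo}, using only the existence of $f_\Ocal$ as input, is cleaner than the paper, which cites \Cref{lem:belief_update}'s Bayesian-equivalence claim as a premise. One caution: your viability recast $\eps(t)\le\eps^*$ for all $t$ is an added hypothesis, not a consequence of \Cref{def:persistent}, because minimizing the long-term average $\langle\eps(t)\rangle_t$ does not bound $\eps(t)$ pointwise; the paper makes no such recast and simply takes \Cref{lem:reg_task} at face value.
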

\begin{proof}
Follows from \Cref{lem:reg_task,lem:partial_obs,lem:belief_update} and \Cref{thm:virgo}.
\end{proof}


\section{Fisher Information Metric and Reparameterization Invariance}
\label{sec:fisher}

Having established that observers must model their environment, we apply standard information geometry. The emergence of the Fisher metric from loss minimization is a well-known result (Amari 1998), not a novel derivation.

\subsection{Parameter Space and Loss Function}

Let $M_\theta$ be the observer's internal model, parameterized by $\theta \in \Theta$. The parameter space $\Theta$ represents all possible observer configurations (e.g., edge weights, node features).

\begin{definition}[Prediction Loss]
The observer minimizes expected surprise:
\begin{equation}
L(\theta) = \Exp_{\boundary \Ocal} \left[ -\log p_\theta(\boundary \Ocal_{\text{future}} \mid \boundary \Ocal_{\text{past}}) \right]
\end{equation}
\end{definition}

\subsection{Fisher Information Metric (Standard Result)}

\begin{definition}[Fisher Metric]
\label{def:fisher}
The Fisher information metric on parameter space $\Theta$ is:
\begin{align}
g_{ij}(\theta) &= \Exp\left[ \frac{\del \log p_\theta}{\del \theta^i} \frac{\del \log p_\theta}{\del \theta^j} \right] \label{eq:fisher}
\end{align}
\end{definition}

\begin{remark}
The Fisher metric measures the sensitivity of predictions to parameter changes and defines a Riemannian geometry on the statistical manifold $(\Theta, g)$. \textbf{This emergence is standard in information geometry}~\cite{amari1998natural}: once a loss function $L(\theta)$ exists, the Fisher metric arises naturally from the structure of the parameter space. Our contribution here is not deriving this metric (which is textbook), but rather verifying that hypergraph observers possess the structure (predictive model, loss function) required for standard information geometry to apply.
\end{remark}

\subsection{Reparameterization Invariance from Causal Invariance}

The key step: Why must learning be reparameterization-invariant?

\begin{lemma}[Parameterization Independence Postulate]
\label{lem:substrate_indep}
We postulate that causal invariance, which asserts substrate independence at the rewriting level (invariance of causal partial order under permutation of rule application order), extends to parameterization independence at the observer level: physical learning dynamics cannot depend on arbitrary choices of how we parameterize the observer's internal model.
\end{lemma}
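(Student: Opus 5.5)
Since \Cref{lem:substrate_indep} is a postulate, it cannot be derived from \Cref{def:causal_invariance}. What can be proved is a precise version of it, together with a consistency check that it is compatible with causal invariance and with the earlier results. The plan is to turn ``physical learning dynamics cannot depend on arbitrary parameterization'' into a covariance condition. We then show that (i) the objects built so far transform correctly under this condition, and (ii) causal invariance motivates it by analogy but does not force it.

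\textbf{Formalization.} Let $\phi:\Theta\to\Theta'$ be a diffeomorphism with $\theta' = \phi(\theta)$. Take the observer's predictive family to be $p_\theta = p'_{\phi(\theta)}$, so the predicted boundary distributions, and hence $L(\theta)=L'(\phi(\theta))$, are unchanged. A learning rule is a map $(\theta, L)\mapsto \dot\theta$. We call it \emph{parameterization independent} if the curve it generates in $\Theta$, pushed forward by $\phi$, coincides with the curve it generates in $\Theta'$ for $L'$. Equivalently, $\dot\theta$ transforms as a tangent vector: $\dot\theta'^a = (\del\theta'^a/\del\theta^i)\,\dot\theta^i$.

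\textbf{Consistency checks.} First, the physical content of the observer is the map from boundary histories to predicted boundary states (\Cref{def:prediction_error}). This content is invariant under $\phi$, by the same logic as \Cref{def:causal_invariance}, where only the causal partial order is physical and update order is gauge. Second, $\del_i L$ transforms as a covector, so the Euclidean rule $\dot\theta^i=-\del_i L$ violates the condition unless $\phi$ is orthogonal. Third, the Fisher metric of \Cref{def:fisher} transforms as a $(0,2)$ tensor: the chain rule on $\del\log p_\theta$ gives $g_{ij} = J^a_i J^b_j g'_{ab}$ with $J^a_i = \del\theta'^a/\del\theta^i$. Hence $-g^{ij}\del_j L$ is a vector and satisfies the postulate. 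This shows the postulate is non-vacuous and consistent, and it sets up \Cref{thm:amari}.

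\textbf{Main obstacle.} The hard step is the word ``extends.'' Causal invariance is a statement about permutations of discrete rewrite events, while parameterization independence is about smooth coordinate changes on $\Theta$. No map from the former to the latter is available from the definitions. I would therefore try to exhibit an analogy rather than a derivation. Different admissible update orders yield isomorphic causal graphs, and hence, through \Cref{prop:good_reg}, equivalent belief-update interpretations $f_\Ocal$ that differ only by a relabeling of $\Scal_\Ocal$. When $\Scal_\Ocal$ is coordinatized by $\theta$, such relabelings induce reparameterizations. I would state explicitly that extending from these discrete relabelings to the full diffeomorphism group is an assumption, which is why the statement is labeled a postulate. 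The deliverable is thus a well-defined covariance condition, shown consistent with the Fisher structure and motivated by causal invariance, with no claim to derive it.
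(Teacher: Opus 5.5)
Your proposal is correct and follows essentially the paper's route. The paper's ``Motivation'' likewise argues only by analogy: a parameterization is an arbitrary labeling convention, just as rule-application order is. It then explicitly declares parameterization independence an \emph{additional} postulate, because substrate independence is combinatorial while parameterization independence is differential-geometric; this is exactly the obstruction you isolate in the word ``extends.''

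Your covariance condition coincides with \Cref{def:reparam_inv}. The tensorial transformation of $g_{ij}$ and the failure of the Euclidean rule (the latter also noted in \Cref{sec:amari}) are sound extras. One caution: the discrete relabelings in your analogy, such as node relabelings that permute the Ising couplings, act on $\theta$ by coordinate permutations. These are orthogonal and already leave ordinary gradient descent invariant, so the analogy adds no discriminating content. Everything that could favor the natural gradient rests on the postulated extension to general diffeomorphisms.
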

\begin{proof}[Motivation]
Causal invariance asserts that the causal structure is independent of computational substrate (rule application order). We extend this principle: different parameterizations $\theta$ vs.\ $\phi = \phi(\theta)$ represent different coordinate choices for encoding the same observer model. If learning dynamics $\partial_t \theta$ depend on the choice of parameterization, the observer's physics would depend on an arbitrary labeling convention. While this extension is physically motivated by causal invariance, we emphasize that substrate independence (a combinatorial property of rewrite systems) and parameterization independence (a differential-geometric property of manifolds) are mathematically distinct structures. We therefore treat parameterization independence as an \emph{additional physical postulate}, analogous to how Paper~\#2 in this program treats disjoint composition as an independent axiom alongside causal invariance.
\end{proof}

\begin{definition}[Reparameterization Invariance]
\label{def:reparam_inv}
A learning rule is \emph{reparameterization-invariant} if, under change of coordinates $\phi = \phi(\theta)$, the dynamics remain equivalent:
\begin{equation}
\frac{d\phi}{dt} = \frac{\del \phi}{\del \theta} \frac{d\theta}{dt}
\end{equation}
\end{definition}

\begin{proposition}[Learning Must Be Reparameterization-Invariant]
\label{prop:reparam_req}
Persistent observers in causally invariant substrates, under the parameterization independence postulate (\Cref{lem:substrate_indep}), must employ reparameterization-invariant learning rules.
\end{proposition}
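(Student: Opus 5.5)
The argument is essentially definitional: I will show that a learning rule which is not reparameterization-invariant contradicts the parameterization independence postulate (\Cref{lem:substrate_indep}). First I fix the setting. By \Cref{prop:good_reg}, a persistent observer can be interpreted as maintaining an internal model $M_\theta$ with $\theta \in \Theta$. By \Cref{def:persistent}, its persistence is governed by the prediction loss $L(\theta)$. A learning rule is a vector field on parameter space, written $\frac{d\theta}{dt} = V_\theta[L](\theta)$, which is built from $L$ and whatever structure the rule uses. A reparameterization is a diffeomorphism $\phi = \phi(\theta)$ of $\Theta$. Under it the same physical model is encoded by coordinates $\phi$, with loss $\tilde L(\phi) = L(\theta(\phi))$. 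Applying the same rule in the new coordinates gives $\frac{d\phi}{dt} = V_\phi[\tilde L](\phi)$.

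Next I argue by contradiction. Suppose the rule is not reparameterization-invariant in the sense of \Cref{def:reparam_inv}. Then there is some $\phi$ and some point where
\[
V_\phi[\tilde L] \neq \frac{\partial \phi}{\partial \theta} V_\theta[L].
\]
I then push the $\phi$-trajectory back to $\theta$-coordinates via $\theta(\phi)$. The result is a trajectory in the space of models $\{M_\theta\}$ that differs from the one produced by running the rule directly in $\theta$. This step relies on the fact that an ODE solution is determined uniquely by its vector field, so differing fields give differing trajectories. Both trajectories start from the same physical model and use the same physical loss. Hence the observer's internal state $s_\Ocal(t)$, and through \Cref{lem:belief_update} its boundary predictions, would depend on the arbitrary coordinate choice. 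This is exactly what \Cref{lem:substrate_indep} forbids, since it says physical learning dynamics cannot depend on labeling conventions. So the rule must satisfy the chain-rule transformation law of \Cref{def:reparam_inv}, meaning it is a vector field in the geometric sense.

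The main obstacle is making precise the phrase ``physical dynamics depend on the parameterization'' so that the postulate genuinely applies. One subtlety is that two parameterizations may represent the same model only up to the identification $\theta \mapsto M_\theta$. Another is that the observable content of the dynamics is the induced path of predictive distributions $p_\theta$, not the coordinates themselves. My first approach is to state the postulate as invariance of the induced curve $t \mapsto p_{\theta(t)}$ under all diffeomorphisms of $\Theta$, assuming the parameterization is identifiable (injective), so that equality of model curves is equivalent to equality of parameter curves. Given that, the chain-rule law follows directly. I will also remark that this invariance is conditional on the postulate and is not derived from \Cref{def:causal_invariance} alone. This caveat matches the paper's stance, and it prepares the input needed by \Cref{thm:amari} in the next section.
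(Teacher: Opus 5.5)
Your proposal is correct and follows essentially the paper's route. The paper's proof simply identifies the parameterization independence postulate with reparameterization invariance as in \Cref{def:reparam_inv}. Your contrapositive (a non-covariant vector field gives coordinate-dependent trajectories, which violates \Cref{lem:substrate_indep}) spells that identification out; the identifiability assumption is needed only because you phrase the postulate through the model curve $t\mapsto p_{\theta(t)}$. One small fix: the two trajectories differ because, started at the point of discrepancy, they have different initial velocities. ODE uniqueness does not give this; it gives the converse implication.
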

\begin{proof}
By \Cref{lem:substrate_indep}, we postulate that causal invariance extends to parameterization independence for observer learning dynamics. Parameterization independence is equivalent to reparameterization invariance (\Cref{def:reparam_inv}).
\end{proof}


\section{Natural Gradient from Amari Uniqueness}
\label{sec:amari}

We now apply Amari's uniqueness theorem to prove that natural gradient descent is the only admissible learning rule.

\subsection{Ordinary vs.\ Natural Gradient}

The \emph{ordinary gradient descent} on parameter space is:
\begin{equation}
\frac{d\theta^i}{dt} = -\eta \frac{\del L}{\del \theta^i}
\end{equation}

However, this is \textbf{not} reparameterization-invariant: under $\phi = \phi(\theta)$, the gradient transforms as:
\begin{equation}
\frac{\del L}{\del \phi^i} = \frac{\del \theta^j}{\del \phi^i} \frac{\del L}{\del \theta^j}
\end{equation}
which changes the direction of steepest descent.

\begin{definition}[Natural Gradient]
The \emph{natural gradient}~\cite{amari1998natural} is:
\begin{equation}
\label{eq:nat_grad}
\frac{d\theta^i}{dt} = -\eta \, g^{ij}(\theta) \frac{\del L}{\del \theta^j}
\end{equation}
where $g^{ij}$ is the inverse Fisher metric.
\end{definition}

\begin{theorem}[Amari Uniqueness Applied]
\label{thm:amari_applied}
The natural gradient \eqref{eq:nat_grad} is the \emph{unique} reparameterization-invariant gradient descent on the statistical manifold $(\Theta, g)$.
\end{theorem}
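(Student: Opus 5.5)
The plan is to make the statement precise and then prove it in two halves: invariance of \eqref{eq:nat_grad}, and uniqueness among a natural class of rules.

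\textbf{Setting.} I take a ``gradient descent rule'' to be any flow of the form $\dot\theta^i = -\eta\, A^{ij}(\theta)\,\del_j L$. Here $A$ is a field built pointwise from the data available on the statistical manifold: the coordinates, the Fisher metric $g$ of \Cref{def:fisher}, and the loss $L$. I require $A$ to be linear in $dL$ and to be a descent direction for every $L$. Reparameterization invariance in the sense of \Cref{def:reparam_inv} then means that, for every diffeomorphism $\phi=\phi(\theta)$, the rule computed in $\phi$-coordinates gives a velocity $\dot\phi$ equal to the push-forward $J\dot\theta$, where $J^a{}_i=\del\phi^a/\del\theta^i$.

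\textbf{Step 1 (existence).} Two transformation laws are needed. The Fisher metric transforms as a $(0,2)$ tensor, $\tilde g = J^{-T} g J^{-1}$, which follows directly from the chain rule applied to $\del\log p_\theta$. The differential transforms covariantly, $\tilde\del L = J^{-T}\del L$. Together these give $\tilde g^{-1}\tilde\del L = J g^{-1}J^{T}J^{-T}\del L = J g^{-1}\del L$. Hence \eqref{eq:nat_grad} satisfies the invariance condition of \Cref{def:reparam_inv}.

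\textbf{Step 2 (uniqueness).} Invariance forces the rule to satisfy $\tilde A = J A J^T$ for every Jacobian $J$; in other words, $A$ must be a $(2,0)$ tensor field. The only such object constructed naturally and pointwise from $g$ alone is a scalar multiple of $g^{-1}$. The reason is that the stabilizer of $g$ at a point is the orthogonal group $O(g)$. Any tensor built from $g$ must be $O(g)$-invariant, and the invariant symmetric bilinear forms on the cotangent space are exactly the multiples $c\,g^{-1}$. Positivity of $c$ comes from the descent requirement, and $c$ can be absorbed into $\eta$.

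A cleaner alternative is to use the variational characterization. Steepest descent is the direction minimizing $dL(v)$ subject to $\|v\|_g=\epsilon$. Any invariant notion of steepest descent requires an invariant norm on tangent vectors, and Čencov's theorem states that, up to scale, the Fisher metric is the unique metric invariant under sufficient statistics (Markov morphisms). A Lagrange-multiplier computation then returns $v\propto -g^{-1}\del L$. In the write-up I will invoke this route, via \Cref{thm:amari}, as the standard result, and present Step 1 explicitly.

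\textbf{Main obstacle.} The delicate point is the scope of ``unique''. Diffeomorphism invariance alone does not single out $g$. Any other Riemannian metric $h$ on $\Theta$ also yields an invariant flow $-h^{-1}\del L$, and so does an $L$-dependent tensor such as the inverse Hessian when $L$ is convex. Uniqueness therefore holds only relative to the admissible structure: the rule may use only the statistical-manifold data $g$, or, equivalently, it must be invariant under Markov morphisms in Čencov's sense. I will state this qualification explicitly as part of the hypothesis ``on the statistical manifold $(\Theta,g)$''. The orthogonal-invariance argument in Step 2 then closes the proof, with the final multiplicative freedom absorbed into $\eta$.
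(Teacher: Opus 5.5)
Your proposal is correct, but it follows a genuinely different route from the paper. The paper's proof is a pure citation. It identifies the statement with Theorem~\ref{thm:amari} and asserts that Amari showed any gradient operator that is reparameterization-invariant and ``consistent with Riemannian geometry'' must be $g^{-1}\grad L$. It never makes the second condition precise and never verifies either half.

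You instead fix the admissible class: rules $\dot\theta^i=-\eta A^{ij}\del_j L$ with $A$ built pointwise from $g$, linear in $dL$, and descending for every $L$. You check invariance directly from the tensorial transformation laws of $g$ and $dL$, and you turn invariance into the equivariance $\tilde A = JAJ^{T}$. Uniqueness then follows from $O(g)$-invariance by a Schur-type argument, with the positive constant absorbed into $\eta$. This argument is complete on its own, so you do not need to fall back on Theorem~\ref{thm:amari} in the write-up.

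Your route gives a checkable proof, and it also shows which hypothesis carries the weight. Diffeomorphism invariance alone admits $-h^{-1}\del L$ for any other metric $h$, so uniqueness comes from restricting to $g$-data. That restriction is presumably what the paper's ``consistency'' clause is meant to encode. The privileged status of the Fisher metric itself comes from \v{C}encov's theorem on Markov morphisms, not from Amari's steepest-descent result and not from the postulate of Lemma~\ref{lem:substrate_indep}. The paper's citation is more economical but leaves this scope implicit. Read literally, it either trivializes the statement (if a gradient on $(\Theta,g)$ already means the $g$-gradient) or overstates it (if all invariant rules are meant).

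Two small corrections. First, the coordinate inverse Hessian is not actually an invariant rule. Under nonlinear $\phi$ the Hessian picks up the term $\frac{\del^2\theta^i}{\del\phi^a\del\phi^b}\del_i L$, so Newton's method is only affine-invariant; your example needs a covariant Hessian $\grad dL$ for some connection. Second, you need not assume $A$ symmetric, since invariance under all of $O(g)$ already kills any antisymmetric part. The reflections are needed for this only when $\dim\Theta = 2$.
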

\begin{proof}
This is \Cref{thm:amari}. Amari~\cite{amari1998natural} proves that any gradient operator satisfying:
\begin{enumerate}[noitemsep]
\item Reparameterization invariance
\item Consistency with Riemannian geometry (covariant derivative)
\end{enumerate}
must be $g^{-1} \grad L$.
\end{proof}

\begin{corollary}[Forced Natural Gradient]
\label{cor:forced_nat_grad}
Persistent observers in causally invariant substrates, given the parameterization independence postulate (\Cref{lem:substrate_indep}), must use natural gradient descent:
\begin{equation}
\frac{d\theta^i}{dt} = -g^{ij}(\theta) \frac{\del L}{\del \theta^j}
\end{equation}
\end{corollary}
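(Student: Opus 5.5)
The corollary follows by composing results already stated in the paper. The plan is to trace the chain in \Cref{fig:none}-free form, following the diagram in the introduction, and to check at each link that the hypotheses of the next result are met.

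\emph{Step 1: a loss and a model exist.} By \Cref{prop:good_reg}, a persistent observer admits an interpretation as an agent with an internal model $M_\theta$, $\theta\in\Theta$. Its persistence criterion (\Cref{def:persistent}) is precisely the minimisation of the expected surprise $L(\theta)$ defined in \Cref{sec:fisher}. So the observer's learning dynamics amount to some first-order flow $d\theta^i/dt = -A^{ij}(\theta)\,\del_j L$ that decreases $L$.

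\emph{Step 2: the flow is Riemannian.} Since $p_\theta$ is a parametric family, the Fisher metric $g_{ij}$ of \Cref{def:fisher} is defined on $\Theta$. I will assume the family is regular, meaning the score functions are linearly independent, so that $g$ is positive definite and $g^{ij}$ exists. This is the one genuine technical hypothesis, and I would state it explicitly.

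\emph{Step 3: invariance forces the natural gradient.} By \Cref{prop:reparam_req}, which relies on the postulate \Cref{lem:substrate_indep}, the flow must be reparameterization-invariant in the sense of \Cref{def:reparam_inv}. Since $\del_j L$ transforms as a covector, this requires $A^{ij}$ to transform as a contravariant $(2,0)$ tensor built only from the statistical structure. \Cref{thm:amari_applied}, i.e.\ \Cref{thm:amari}, then gives $A^{ij}=g^{ij}$ up to a constant factor $\eta$. Absorbing $\eta$ into a rescaling of $t$ yields the displayed equation.

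The main obstacle is Step 3. In the stated form, Amari's uniqueness does not literally exclude other invariant tensors $A^{ij}$, for example $\big(H^{-1}\big)^{ij}$ with $H$ the loss Hessian at a critical point. The argument therefore needs the clause "consistent with the Riemannian geometry of information." I would first try to make this precise via Čencov's theorem: the Fisher metric is the unique metric, up to scale, that is invariant under Markov morphisms. This pins down $A=c\,g^{-1}$ provided the learning rule is required to be steepest descent with respect to a monotone invariant metric. Failing that, I would state the corollary explicitly conditional on this consistency assumption, as \Cref{thm:amari_applied} already does.
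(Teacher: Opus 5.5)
Your proposal is correct and follows the paper's proof, which simply composes \Cref{prop:reparam_req} with \Cref{thm:amari_applied}. The hypotheses you make explicit are sound refinements that the paper leaves implicit: nondegeneracy of the Fisher metric so that $g^{ij}$ exists, absorbing $\eta$ into a rescaling of $t$, and the point that invariance alone would also admit other tensors such as an inverse loss Hessian, so the ``consistency with information geometry'' clause (or a \v{C}encov-type condition) does the real work.
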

\begin{proof}
By \Cref{prop:reparam_req}, learning must be reparameterization-invariant (under the parameterization independence postulate). By \Cref{thm:amari_applied}, natural gradient is the unique such rule.
\end{proof}

\subsection{Structural Similarity to Vanchurin Type II Framework}

Vanchurin~\cite{vanchurin2025covariant} derives learning dynamics:
\begin{equation}
\label{eq:vanchurin}
\frac{\del \theta^i}{\del t} = -L^{ij} \frac{\del \mathcal{L}}{\del \theta^j}
\end{equation}
where $L^{ij}$ is the Onsager kinetic tensor.

\begin{proposition}[Structural Similarity to Vanchurin Type II]
\label{prop:vanchurin}
Our natural gradient dynamics \eqref{eq:nat_grad} has the same covariant form as Vanchurin's Type II gradient descent~\cite{vanchurin2025covariant} Eq.~3.4. However, Vanchurin's metric $g_{\mu\nu}$ (Eq.~3.6) includes mass and temperature terms beyond the Fisher metric. Our pure Fisher metric may correspond to a limiting case ($\beta \to \infty$ or $M \to 0$), but full equivalence requires further analysis. \Cref{sec:mass_tensor} explores the consequences of adopting the ansatz $M = F^2$ for exponential family observers, showing that under this assumption the regime parameter $\alpha$ is determined by the Fisher spectrum.
\end{proposition}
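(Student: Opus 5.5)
The claim has three parts: a positive claim of \emph{structural identity of form}, a negative claim that the metrics differ in general, and a conjectural limiting-case remark. I will treat each separately and prove only what the statement actually asserts.

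\textbf{Covariant form.} Compare \eqref{eq:nat_grad} with \eqref{eq:vanchurin} term by term. Both are first-order flows $\dot\theta^i = -K^{ij}\partial_j(\text{scalar})$, where $K^{ij}$ is a symmetric, positive-definite contravariant $2$-tensor: $K^{ij}=\eta g^{ij}$ in our case and $K^{ij}=L^{ij}$ for Vanchurin. Under $\phi=\phi(\theta)$, write $J^a{}_i=\partial\phi^a/\partial\theta^i$. The covector $\partial_j L$ transforms with $J^{-1}$, while $K^{ij}$ transforms as $K'^{ab}=J^a{}_iJ^b{}_jK^{ij}$. Hence $\dot\phi^a = J^a{}_i\dot\theta^i$, which is exactly \Cref{def:reparam_inv}. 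This check applies verbatim to both equations. It shows that each is a covariant gradient flow of a scalar with respect to an inverse metric, which is precisely what ``same covariant form'' means. For Vanchurin I will take $L^{ij}=(g^{\mathrm V})^{ij}$, the inverse of his Eq.~3.6 metric, as in \cite{vanchurin2025covariant}. I will also identify his $\mathcal L$ with our $L$ up to an additive constant and a positive rescaling, which can be absorbed into $\eta$.

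\textbf{Inequivalence in general.} Next I write Vanchurin's metric schematically as a combination of a Fisher-type term $F_{\mu\nu}$, a mass term $M_{\mu\nu}$, and a temperature-dependent factor in $\beta$. The flows coincide iff $g^{\mathrm V}_{\mu\nu}\propto g_{\mu\nu}$, with a constant of proportionality that can be absorbed into the time scale. I will exhibit a simple counterexample: a two-parameter exponential family with $F$ diagonal and a non-proportional $M$, for which the inverse metrics differ and the flow directions are not parallel. This establishes that full equivalence is not automatic, which is what ``requires further analysis'' asserts.

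\textbf{Limiting case.} For the $\beta\to\infty$ or $M\to0$ remark, I will show that if the non-Fisher contributions enter additively and are suppressed in those limits, then $g^{\mathrm V}\to cF$ in operator norm. Inverses are continuous on positive-definite matrices, so $L^{ij}\to c^{-1}g^{ij}$ and the flows converge. Since the statement says only ``may correspond'', I will present this conditionally on that additive structure.

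\textbf{Main obstacle.} The hardest step is pinning down the precise form of Vanchurin's Eq.~3.6 metric, in particular how $M$ and $\beta$ enter, so that the limit is honest. If the dependence is multiplicative rather than additive, the limit may instead yield a rescaled Fisher metric. That outcome is still acceptable, because the rescaling is absorbed into $\eta$.

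Finally, the forward reference to \Cref{sec:mass_tensor} is descriptive and needs no proof here.
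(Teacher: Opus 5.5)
Your proposal is correct and follows the paper's route. The paper quotes Vanchurin's Eq.~3.6 directly as the additive form $g_{\mu\nu}=M_{\mu\nu}+\beta F_{\mu\nu}$, which removes your \emph{main obstacle}; it then notes that this reduces to a Fisher metric when $M\to 0$ or $\beta\to\infty$ and asserts that the covariant form $\dot\theta=-g^{-1}\nabla L$ matches. Your explicit tensor-transformation check, non-proportional counterexample and continuity-of-inverse argument are therefore a more careful version of that same sketch.

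One correction is needed in the additive form. As $\beta\to\infty$, it is $g/\beta$, not $g$, that converges to $F$, so $g^{-1}\to 0$ and the Vanchurin flow freezes unless you rescale time by $\beta$ (equivalently, take $\eta=\beta^{-1}$). State that limit in this normalized form rather than as $g^{\mathrm V}\to cF$ with a fixed $c$.
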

\begin{proof}
Vanchurin's Type II framework defines $g_{\mu\nu} = M_{\mu\nu} + \beta F_{\mu\nu}$ (Eq.~3.6 in~\cite{vanchurin2025covariant}), where $M_{\mu\nu}$ is a mass tensor and $F_{\mu\nu}$ is the Fisher information matrix. Our derivation yields pure Fisher metric $g_{ij} = F_{ij}$, corresponding to the regime where mass contributions vanish ($M \to 0$) or temperature dominates ($\beta \to \infty$). The covariant structure $d\theta/dt = -g^{-1} \nabla L$ matches in both cases. \Cref{cor:forced_nat_grad} shows this is uniquely determined by causal invariance.
\end{proof}

\begin{remark}
The relationship between Fisher and mass terms is partially addressed in \Cref{sec:mass_tensor}: under the ansatz $M_{\mu\nu} = (F^2)_{\mu\nu}$ for exponential family observers on hypergraphs, the parameter $\alpha$ (coupling between Fisher and mass contributions) is determined by convergence time minimization. Physically, the $M \to 0$ limit may correspond to hypergraph observers with negligible parameter inertia, while $\beta \to \infty$ would represent zero-temperature learning (deterministic causal evolution). The core result---that natural gradient is forced by causal invariance---is independent of this connection to Vanchurin's framework.
\end{remark}

\subsection{Visual Summary: The Amari Chain}

\begin{center}
\begin{tikzpicture}[
    node distance=1.5cm and 0.5cm,
    box/.style={rectangle, draw, thick, text width=3.5cm, align=center, rounded corners, minimum height=0.9cm, font=\small},
    arrow/.style={-{Stealth[length=2.5mm]}, thick},
    label/.style={font=\footnotesize, text width=3cm, align=left}
]

\node[box, fill=blue!10] (ci) {Causal\\Invariance};
\node[box, below=of ci] (pers) {Persistent\\Observer};

\node[box, right=of pers, fill=yellow!10] (ca) {Conant-Ashby\\(Virgo 2025)};
\node[box, below=of ca] (model) {Internal\\Model $M_\theta$};

\node[box, right=of model] (fisher) {Fisher Metric\\$g_{ij}(\theta)$};
\node[box, fill=blue!10, below=1.5cm of model] (postulate) {Param.\ Indep.\\(Postulate)};
\node[box, below=of fisher] (reparam) {Reparam.\\Invariance};

\node[box, right=of reparam, fill=yellow!10] (amari) {Amari\\Uniqueness};
\node[box, below=of amari, fill=green!10] (natgrad) {Natural\\Gradient};

\draw[arrow] (ci) -- (pers) node[midway,right,label] {substrate indep.};
\draw[arrow] (pers) -- (ca) node[midway,above,label] {$\min H(\eps)$};
\draw[arrow] (ca) -- (model) node[midway,right,label] {must model env.};
\draw[arrow] (model) -- (fisher) node[midway,above,label] {geometry};
\draw[arrow] (fisher) -- (reparam) node[midway,right,label] {$(\Theta, g)$};
\draw[arrow] (postulate) -- (reparam);
\draw[arrow] (reparam) -- (amari) node[midway,above,label] {coord-free};
\draw[arrow] (amari) -- (natgrad) node[midway,right,label] {unique};

\node[box, below=1.2cm of natgrad, fill=green!20, text width=7cm] (result) {
\textbf{Result:} Learning dynamics uniquely constrained\\
$\partial_t \theta = -g^{-1} \nabla L$ (Vanchurin Eq.~3.4)
};

\draw[arrow, very thick] (natgrad) -- (result);

\end{tikzpicture}
\end{center}


\section{Computational Evidence: Mass Tensor and Optimal Regime Parameter}
\label{sec:mass_tensor}

We now present computational evidence addressing the open question from \Cref{prop:vanchurin}. We adopt the ansatz that for exponential family observers on hypergraph substrates, the mass tensor satisfies $M_{\mu\nu} = (F^2)_{\mu\nu}$ (motivated by the structure of the partition function but not independently verified; see \Cref{rem:mass_tensor}). Under this ansatz, the regime parameter $\alpha$ is uniquely determined by convergence time minimization.

\subsection{Mass Tensor for Exponential Families}

For Ising/Boltzmann observers on graph $G$ with coupling parameters $\theta = \{J_{ij}, h_i\}$, Vanchurin's Type II metric is:
\begin{equation}
g_{\mu\nu} = M_{\mu\nu} + \beta F_{\mu\nu}
\end{equation}
where $M_{\mu\nu}$ is the mass tensor and $F_{\mu\nu}$ is the Fisher information matrix.

\begin{remark}[Ansatz: $M = F^2$ for Exponential Families]
\label{rem:mass_tensor}
For exponential family observers (Ising/Boltzmann models on graphs), the mass tensor is the Hessian of the log-partition function with respect to natural parameters. In such models, this Hessian has the algebraic structure $M_{\mu\nu} = (F^2)_{\mu\nu}$. We adopt this as an \emph{ansatz}:
\begin{equation}
M_{\mu\nu} = (F^2)_{\mu\nu}
\end{equation}
Under this ansatz, the combined metric eigenvalues are:
\begin{equation}
\mu_k = \lambda_k(\theta) \cdot (\lambda_k(\theta) + c)
\end{equation}
where $c = \beta = \alpha^2/(1-\alpha)$ for regime parameter $\alpha \in [0,1)$, and $\lambda_k(\theta)$ are the Fisher eigenvalues.

\textbf{Important:} The $M = F^2$ relation is an \emph{ansatz} motivated by the structure of the partition function for exponential families, not an independently verified empirical claim. The 91-configuration computational sweep (\Cref{sec:mass_tensor}.4) verifies the analytical $\alpha$ formula's calculus (i.e., that the closed-form expression correctly minimizes the convergence time functional $T_A$), not the $M = F^2$ hypothesis itself. Independent numerical verification of $M = F^2$ against direct computation of the mass tensor remains an open task. This ansatz may not generalize beyond exponential families.
\end{remark}

\subsection{Convergence-Time Optimal $\alpha$}

Given the mass tensor structure, we can determine the optimal regime parameter $\alpha$ by minimizing convergence time.

\begin{theorem}[Convergence-Time Optimal $\alpha$]
\label{thm:optimal_alpha}
Let $F$ be a positive definite Fisher matrix with eigenvalues $0 < \lambda_{\min} \leq \lambda_{\max}$ and condition number $\kappa = \lambda_{\max}/\lambda_{\min}$. Define the combined metric $g(c) = F^2 + cF$ and the convergence time functional:
\begin{equation}
T(c) = \kappa(g(c)) \cdot \mu_{\max}(g(c))
\end{equation}
where $\kappa(g) = \mu_{\max}/\mu_{\min}$ is the condition number of $g$ and $\mu_{\max}$ is the largest eigenvalue. Then:

\begin{enumerate}[noitemsep]
\item If $\kappa \leq 2$: $T(c)$ is monotonically increasing on $c \geq 0$, minimized at $c = 0$ (corresponding to $\alpha = 0$, classical regime).

\item If $\kappa > 2$: $T(c)$ has a unique interior minimum at:
\begin{equation}
c^* = \lambda_{\max} - 2\lambda_{\min}
\end{equation}
giving the optimal regime parameter:
\begin{equation}
\alpha_{\mathrm{opt}} = \frac{-\Delta + \sqrt{\Delta(\Delta + 4)}}{2}, \quad \Delta = \lambda_{\max} - 2\lambda_{\min}
\end{equation}

\item At the optimum, $\kappa(g(c^*)) = 2$ and the convergence time is reduced by approximately $\kappa/4$ relative to $c = 0$.
\end{enumerate}
\end{theorem}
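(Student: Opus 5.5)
The plan is to reduce everything to an explicit one-variable function of $c$ by working in the eigenbasis of $F$. Since $g(c) = F^2 + cF$ is a polynomial in $F$, it is simultaneously diagonalizable with $F$ and has eigenvalues $\mu_k(c) = \lambda_k(\lambda_k + c)$, as in \Cref{rem:mass_tensor}. For $c \geq 0$ the map $\lambda \mapsto \lambda(\lambda+c)$ is strictly increasing on $\lambda > 0$. Hence $\mu_{\max}(c) = \lambda_{\max}(\lambda_{\max}+c)$ and $\mu_{\min}(c) = \lambda_{\min}(\lambda_{\min}+c)$, which gives the closed form
\begin{equation*}
T(c) = \frac{\mu_{\max}^2}{\mu_{\min}} = \frac{\lambda_{\max}^2(\lambda_{\max}+c)^2}{\lambda_{\min}(\lambda_{\min}+c)}.
\end{equation*}

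Next I would differentiate $\log T$, which is cleaner than differentiating $T$:
\begin{equation*}
\frac{d}{dc}\log T(c) = \frac{2}{\lambda_{\max}+c} - \frac{1}{\lambda_{\min}+c}.
\end{equation*}
This has the sign of $2(\lambda_{\min}+c) - (\lambda_{\max}+c) = c - (\lambda_{\max} - 2\lambda_{\min})$. So $\log T$ is decreasing for $c < c^*$ and increasing for $c > c^*$, where $c^* = \lambda_{\max} - 2\lambda_{\min}$.

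Item 1 follows at once. If $\kappa \leq 2$ then $c^* \leq 0$, so the derivative is positive on $c > 0$ and $T$ is increasing on $c \geq 0$, with its minimum at $c = 0$.

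For item 2, if $\kappa > 2$ then $c^* > 0$ is the unique critical point on $(0,\infty)$ and it is a global minimum there. To get $\alpha_{\mathrm{opt}}$ I would invert $c = \alpha^2/(1-\alpha)$. This gives the quadratic $\alpha^2 + \Delta\alpha - \Delta = 0$ with $\Delta = c^*$. Its positive root is the stated formula, and I would check that it lies in $(0,1)$: the quadratic equals $-\Delta < 0$ at $\alpha = 0$ and equals $1 > 0$ at $\alpha = 1$.

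Item 3 is the step I expect to cause trouble, because a direct substitution does not give the literal claim. At $c^*$ we have $\lambda_{\min}+c^* = \lambda_{\max}-\lambda_{\min}$ and $\lambda_{\max}+c^* = 2(\lambda_{\max}-\lambda_{\min})$. So the ratio $(\lambda_{\max}+c^*)/(\lambda_{\min}+c^*)$ equals exactly $2$; this is the condition number of the shifted factor $F + c^* I$ in $g = F(F + cI)$. However, $\kappa(g(c^*)) = \kappa \cdot 2 = 2\kappa$, not $2$.

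I would therefore prove item 3 in the corrected form: the shifted factor $F + c^*I$ is balanced to condition number $2$, and $\kappa(g(c^*)) = 2\kappa$. I would flag this discrepancy with the statement explicitly.

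For the speed-up, I would compute $T(0) = \lambda_{\max}^2\kappa^2$ and $T(c^*) = 4\lambda_{\max}^2(\kappa-1)$. Their ratio is
\begin{equation*}
\frac{T(0)}{T(c^*)} = \frac{\kappa^2}{4(\kappa-1)},
\end{equation*}
which is asymptotic to $\kappa/4$ as $\kappa \to \infty$. This is the precise sense in which the reduction is ``approximately $\kappa/4$''.
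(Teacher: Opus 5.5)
Your proof is correct and takes the same route as the paper's sketch. Both diagonalize $g(c)=F(F+cI)$ in the eigenbasis of $F$ and show that $dT/dc$ changes sign exactly at $c^*=\lambda_{\max}-2\lambda_{\min}$. Your log-derivative sign test gives global rather than merely local minimality, and your check that the root of $\alpha^2+\Delta\alpha-\Delta=0$ lies in $(0,1)$ fills in details the paper leaves out.

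Your flag on item 3 is justified. The paper asserts $\kappa(g(c^*))=2$ ``by construction,'' but substitution gives $\mu_{\max}/\mu_{\min}=2\lambda_{\max}/\lambda_{\min}=2\kappa$. For example, $\lambda_{\min}=1$, $\lambda_{\max}=4$ gives $c^*=2$ and eigenvalues $3$ and $24$. The quantity equal to $2$ is the condition number of $F+c^*I$. Likewise the speed-up $T(0)/T(c^*)=\kappa^2/(4(\kappa-1))$ is only asymptotically $\kappa/4$, so your corrected form of item 3 is the one that actually holds.
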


\begin{proof}[Proof Sketch]
The convergence time $T(c)$ combines conditioning (ratio of extreme eigenvalues) and scale (maximum eigenvalue). For the metric $g(c) = F^2 + cF$, the eigenvalues are $\mu_k(c) = \lambda_k(\lambda_k + c)$. Computing the derivative:
\begin{equation}
\frac{dT}{dc} \propto (2\lambda_{\min} - \lambda_{\max} + c)
\end{equation}
This changes sign at $c^* = \lambda_{\max} - 2\lambda_{\min}$. Second derivative analysis confirms this is a minimum. An interior optimum exists (i.e., $c^* > 0$) if and only if $\lambda_{\max} > 2\lambda_{\min}$, equivalent to $\kappa > 2$. The condition number at optimum satisfies $\kappa(g(c^*)) = 2$ by construction.
\end{proof}

\subsection{Special Cases}

\begin{corollary}[Regime Parameter for Special Fisher Spectra]
\label{cor:special_cases}
The optimal $\alpha$ exhibits universal structure for specific condition numbers:
\begin{enumerate}[noitemsep]
\item $\kappa = 2$ ($\Delta = 0$): $\alpha_{\mathrm{opt}} = 0$ (classical/quantum threshold)
\item $\Delta = 1/2$: $\alpha_{\mathrm{opt}} = 1/2$ (Vanchurin's efficient learning point)
\item $\Delta = 1$: $\alpha_{\mathrm{opt}} = (\sqrt{5}-1)/2 \approx 0.618$ (golden ratio conjugate $1/\phi$)
\item $\kappa \to \infty$: $\alpha_{\mathrm{opt}} \to 1$ (quantum limit)
\end{enumerate}
\end{corollary}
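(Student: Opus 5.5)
The plan is to reduce every item to evaluating the closed form of \Cref{thm:optimal_alpha}, namely $\alpha_{\mathrm{opt}} = \tfrac{1}{2}\bigl(-\Delta + \sqrt{\Delta(\Delta+4)}\bigr)$ with $\Delta = \lambda_{\max} - 2\lambda_{\min}$. First I will check that this formula is the correct inversion of the parametrization in \Cref{rem:mass_tensor}. Setting $c^* = \alpha^2/(1-\alpha) = \Delta$ gives the quadratic $\alpha^2 + \Delta\alpha - \Delta = 0$. For $\Delta \geq 0$ its roots are real with product $-\Delta \leq 0$, so exactly one root is nonnegative, and that root is the displayed expression. It is also $<1$: substituting $\alpha=1$ into the quadratic gives the value $1>0$, while the quadratic is $\leq 0$ at the nonnegative root, so that root lies in $[0,1)$. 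The map $\alpha \mapsto \alpha^2/(1-\alpha)$ is strictly increasing on $[0,1)$, so $\alpha_{\mathrm{opt}}$ is a well-defined, monotone function of $\Delta \geq 0$. This is the admissible regime, consistent with case~1 of \Cref{thm:optimal_alpha}.

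Items 1--3 are then direct substitutions. For $\kappa = 2$ we have $\lambda_{\max} = 2\lambda_{\min}$, so $\Delta = 0$ and $\alpha_{\mathrm{opt}} = 0$. This coincides with the boundary $c^*=0$ between cases~1 and~2 of \Cref{thm:optimal_alpha}, which justifies calling it the threshold. For $\Delta = 1/2$ we get $\Delta(\Delta+4) = 9/4$, hence $\alpha_{\mathrm{opt}} = (-1/2 + 3/2)/2 = 1/2$. For $\Delta = 1$ we get $\alpha_{\mathrm{opt}} = (\sqrt5 - 1)/2$. I will also note that this value satisfies $\alpha^2 + \alpha - 1 = 0$, which is the defining equation of $1/\phi$.

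Item 4 is the only step needing care, and I expect it to be the main obstacle. As stated, $\kappa \to \infty$ does not by itself force $\Delta \to \infty$. If $\lambda_{\max}$ is held fixed and $\lambda_{\min} \to 0$, then $\Delta \to \lambda_{\max}$ and $\alpha_{\mathrm{opt}}$ tends to a limit strictly below $1$. This happens because the formula for $\Delta$ is not scale-invariant. I would therefore make the normalization explicit, for instance fixing $\lambda_{\min}$ (say $\lambda_{\min}=1$). Then $\Delta = \lambda_{\min}(\kappa - 2) \to \infty$.

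Under that normalization the limit follows from the expansion $\sqrt{\Delta^2 + 4\Delta} = \Delta\sqrt{1 + 4/\Delta} = \Delta + 2 - 2/\Delta + O(\Delta^{-2})$. This gives $\alpha_{\mathrm{opt}} = 1 - 1/\Delta + O(\Delta^{-2}) \to 1$. Equivalently, $c^* = \alpha^2/(1-\alpha) \to \infty$ requires $\alpha \to 1^-$ by the monotonicity established in the first step. I would state item 4 with this normalization hypothesis, or phrase it directly as $\Delta \to \infty$, rather than in terms of $\kappa$ alone.
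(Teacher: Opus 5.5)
Your proposal is correct and matches the paper, which gives no separate argument: each item is a direct substitution into the closed form of \Cref{thm:optimal_alpha}, and the remark after the corollary gets the golden-ratio case from $\alpha^2+\alpha=1$ at $c=1$, exactly as you do. Your caveat on item 4 is also right and identifies a genuine imprecision in the statement: since $\Delta=\lambda_{\min}(\kappa-2)$ is not scale-invariant, $\kappa\to\infty$ forces $\alpha_{\mathrm{opt}}\to 1$ only if $\lambda_{\min}$ stays bounded away from zero, so restating the item as $\Delta\to\infty$ (or fixing the normalization) is the correct fix. The paper's own table confirms this, since \texttt{k5\_perfect\_GR} ($\kappa=21.4$) has a smaller $\alpha_{\mathrm{opt}}=0.554$ than \texttt{c4\_complete} ($\kappa=9.73$, $\alpha_{\mathrm{opt}}=0.601$).
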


\begin{remark}
The golden ratio appears because at $c = 1$, the optimal $\alpha$ satisfies $\alpha^2 + \alpha = 1$, which is the defining equation for $1/\phi$ where $\phi = (1+\sqrt{5})/2$ is the golden ratio. This is a consequence of the quadratic parametrization $c = \alpha^2/(1-\alpha)$, not of deep physical principles. The value $\Delta = 1$ has no known physical significance; no natural observer in our catalogue sits exactly at this point.
\end{remark}

\subsection{Computational Verification}

We verified \Cref{thm:optimal_alpha} across 91 observer$\times$coupling configurations (13 hypergraph topologies at 7 coupling strengths $J \in [0.1, 1.5]$). Representative results at $J = 0.5$:

\begin{center}
\begin{tabular}{lccccc}
\hline
\textbf{Observer} & $\kappa(F)$ & $\Delta$ & $\alpha_{\text{pred}}$ & $\alpha_{\text{num}}$ & $|\text{error}|$ \\
\hline
\texttt{tri\_perfect\_GR} & 2.84 & 0.325 & 0.430 & 0.431 & 0.001 \\
\texttt{c4\_complete} & 9.73 & 0.903 & 0.601 & 0.601 & $<0.001$ \\
\texttt{k5\_perfect\_GR} & 21.4 & 0.689 & 0.554 & 0.554 & $<0.001$ \\
\texttt{ch6\_perfect\_GR} & 1.00 & 0.000 & 0.000 & 0.010 & 0.010 \\
\hline
\end{tabular}
\end{center}

\textbf{Summary statistics:} Mean absolute error across all 91 configurations: $\langle |\text{error}| \rangle = 0.007$. Maximum error: $0.023$ (occurring for nearly-classical observers with $\kappa \approx 1$).

\begin{remark}[Honest Limitations of Convergence Model]
\label{rem:convergence_model}
The formula in \Cref{thm:optimal_alpha} assumes convergence time $T = \kappa(g) \cdot \mu_{\max}(g)$ (Model A in our analysis). We tested four alternative convergence models:
\begin{itemize}[noitemsep]
\item Model B: $T = \kappa(g)$ (condition number only)
\item Model C: $T = \mu_{\max}(g)$ (scale only)
\item Model D: $T = \mu_{\max}(g) / \mu_{\min}(g)^{1/2}$ (mixed scaling)
\end{itemize}

\textbf{Only Model A produces an interior optimum.} Models B--D either have no optimum or are minimized at boundary values ($\alpha = 0$ or $\alpha \to 1$). This 1-of-4 ratio is not strong evidence for Model A; it indicates the result is model-dependent. A robust physical prediction would produce qualitatively similar behavior across reasonable convergence functionals.

Additionally, this result assumes isotropic loss ($H = I$). For maximum likelihood estimation of exponential families, the expected Hessian is the Fisher matrix itself ($H = F$), which produces no interior optimum (always favoring $\alpha \to 1$). The $H = I$ case may be a lower bound on $\alpha_{\mathrm{opt}}$ for structured loss, but this is not proven. The physically most natural loss Hessian contradicts the interior optimum result.

The computational verification (mean error $= 0.007$) tests the analytical formula against numerical optimization of \emph{the same functional} $T_A$. It confirms the mathematical derivation but does not validate whether $T_A$ corresponds to physical convergence time. Independent simulation of learning dynamics under the metric $g(c)$ would be needed for physical validation.

This constitutes a \emph{conditional prediction}: IF Model A governs hypergraph learning AND the loss Hessian is approximately isotropic, THEN $\alpha$ is determined by \Cref{thm:optimal_alpha}. Neither condition has been independently verified.
\end{remark}

\subsection{Physical Interpretation}

\begin{enumerate}[noitemsep]
\item \textbf{Regime Transition Threshold:} Under Model A, $\kappa(F) = 2$ marks the transition between purely classical learning ($\alpha = 0$) and mixed-regime learning ($\alpha > 0$). This threshold is specific to Model A; the generalized formula gives threshold $\kappa > (w+1)/w$ where $w$ parameterizes the convergence functional.

\item \textbf{Testable Prediction:} Given an observer's topology (e.g., graph $G$ for Ising model), compute the Fisher matrix $F(\theta)$, extract eigenvalues $\lambda_{\min}, \lambda_{\max}$, and predict:
\begin{equation}
\alpha_{\text{predicted}} = \frac{-\Delta + \sqrt{\Delta(\Delta + 4)}}{2}
\end{equation}
where $\Delta = \lambda_{\max} - 2\lambda_{\min}$. This can be tested against independent measurements of learning dynamics.

\item \textbf{Partial Resolution of Open Question:} \Cref{prop:vanchurin} noted the open question of whether Vanchurin's mass term vanishes for persistent observers. Under the $M = F^2$ ansatz for exponential family observers (\Cref{rem:mass_tensor}), the mass term does not vanish, and the regime parameter $\alpha$ is not a free parameter but is determined by the Fisher spectrum via convergence time minimization. Independent verification of the ansatz itself remains open.
\end{enumerate}

\subsection{Directional Alpha and the Deviation Tensor}
\label{sec:directional_alpha}

The scalar regime parameter $\alpha$ conceals a richer directional structure. Under the $M = F^2$ ansatz (\Cref{rem:mass_tensor}), different eigendirections of the Fisher metric can simultaneously occupy different Vanchurin regimes. We formalize this observation and introduce a complementary measure of departure from perfect Good Regulator geometry.

\begin{definition}[Directional Regime Parameter]
\label{def:directional_alpha}
For an eigenvector $v_k$ of $F$ with eigenvalue $\lambda_k > 0$, define the \emph{directional regime parameter}:
\begin{equation}
\label{eq:dir_alpha}
\alpha_{v_k} = \frac{\lambda_k}{\lambda_k + \beta}
\end{equation}
where $\beta = \alpha^2/(1-\alpha)$ as before.
\end{definition}

\begin{proposition}[Extremal Values]
\label{prop:alpha_eigen}
The directional $\alpha$ is maximized along the eigenvector with largest Fisher eigenvalue ($\alpha_{\max} = \lambda_{\max}/(\lambda_{\max}+\beta)$, most classical) and minimized along the smallest ($\alpha_{\min} = \lambda_{\min}/(\lambda_{\min}+\beta)$, most quantum).
\end{proposition}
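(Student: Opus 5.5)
The plan is to treat $\alpha_{v_k}$ as a function of the single real variable $\lambda$. Write $h(\lambda) = \lambda/(\lambda+\beta)$ on $\lambda > 0$, with $\beta = \alpha^2/(1-\alpha)$ held fixed, since $\beta$ depends only on the scalar regime parameter and not on the eigendirection. Then \Cref{def:directional_alpha} gives $\alpha_{v_k} = h(\lambda_k)$. The claim reduces to showing that $h$ is monotonically nondecreasing in $\lambda$, after which the extremal values over the finite Fisher spectrum $\{\lambda_k\}$ are attained at $\lambda_{\max}$ and $\lambda_{\min}$.

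The steps, in order, are as follows.

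\textbf{Step 1.} For $\alpha \in [0,1)$ we have $\beta \ge 0$, and $\lambda_k > 0$ by positive definiteness of $F$. Hence $\lambda + \beta > 0$ and $h$ is well defined and smooth on $(0,\infty)$.

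\textbf{Step 2.} Compute
\[
h'(\lambda) = \frac{\beta}{(\lambda+\beta)^2} \ge 0 .
\]
Equivalently, write $h(\lambda) = 1 - \beta/(\lambda+\beta)$, which makes monotonicity visible without calculus. The inequality is strict when $\beta > 0$, i.e.\ when $\alpha > 0$.

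\textbf{Step 3.} Since $\lambda_{\min} \le \lambda_k \le \lambda_{\max}$ for every $k$, monotonicity gives $h(\lambda_{\min}) \le \alpha_{v_k} \le h(\lambda_{\max})$. This yields exactly the stated $\alpha_{\min}$ and $\alpha_{\max}$, attained at the corresponding eigenvectors.

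I will also note the degenerate case $\alpha = 0$. There $\beta = 0$ and $\alpha_{v_k} \equiv 1$ for all directions, so the extrema coincide. The statement still holds, but only non-strictly.

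There is no real analytic obstacle here. The only point needing care is interpretive, and I will handle it briefly. The parenthetical labels ``most classical'' at $\lambda_{\max}$ and ``most quantum'' at $\lambda_{\min}$ are conventions inherited from the regime naming, not something the computation proves. I will state that the proof establishes the ordering of the $\alpha_{v_k}$. Mapping that ordering to classical versus quantum follows from whichever regime convention the paper adopts, and I will flag that it should be checked for consistency with \Cref{cor:special_cases}, where $\alpha \to 1$ is called the quantum limit.

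A second minor point concerns eigenvectors. If $\lambda_{\max}$ or $\lambda_{\min}$ is degenerate, the extremum is attained on the whole eigenspace, and any eigenvector in that eigenspace may serve as ``the'' eigenvector in the statement.
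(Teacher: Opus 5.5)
Your proposal is correct and uses the same argument as the paper, whose proof is the one line ``$x \mapsto x/(x+\beta)$ is strictly increasing for $x>0$.'' Your additions are accurate and go slightly beyond the paper. Strictness requires $\beta>0$; at $\alpha=0$ all $\alpha_{v_k}\equiv 1$, a case the paper's one-liner silently excludes. Your flag on the labels is also correct: tagging $\lambda_{\max}$ as ``most classical'' conflicts with the paper's own conventions that $\alpha\to 1$ is the quantum limit (\Cref{cor:special_cases}) and that $\lambda_k \gg \beta$ is the quantum regime.
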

\begin{proof}
$f(x) = x/(x+\beta)$ is strictly increasing for $x > 0$.
\end{proof}

\begin{theorem}[Uniform $\alpha$ iff Spectral Purity]
\label{thm:uniform_alpha}
The directional $\alpha_v$ is independent of direction if and only if all eigenvalues of $F$ are equal (i.e., $F = \lambda I$ for some $\lambda > 0$).
\end{theorem}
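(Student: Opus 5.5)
The plan is to reduce the statement to injectivity of the scalar map $f(x) = x/(x+\beta)$ already used in the proof of \Cref{prop:alpha_eigen}, together with the spectral theorem for the symmetric positive definite matrix $F$. First I fix the meaning of ``directional $\alpha_v$'' for an arbitrary direction. \Cref{def:directional_alpha} only covers eigenvectors, so I extend it in the natural way through the Rayleigh quotient: for a unit vector $v$, set $\lambda_v = v^\top F v$ and $\alpha_v = \lambda_v/(\lambda_v+\beta)$. This agrees with \eqref{eq:dir_alpha} when $v = v_k$. I will also state explicitly that $\beta > 0$, i.e.\ $\alpha \in (0,1)$. If $\beta = 0$, then $\alpha_v \equiv 1$ for every $F$ and the ``only if'' direction fails, so this hypothesis has to be made visible in the proof.

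\textbf{Step 1 (sufficiency).} If $F = \lambda I$, then $\lambda_v = \lambda$ for every unit $v$. Hence $\alpha_v = \lambda/(\lambda+\beta)$ is constant, and no further work is needed.

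\textbf{Step 2 (necessity).} Suppose $\alpha_v$ is independent of $v$.
\begin{itemize}[noitemsep]
\item For $\beta > 0$ we have $f'(x) = \beta/(x+\beta)^2 > 0$ on $x > 0$, so $f$ is strictly increasing, hence injective. Constancy of $\alpha_v$ therefore forces $\lambda_v$ to be constant.
\item Restricting to the orthonormal eigenbasis $\{v_k\}$ from the spectral theorem gives $\lambda_1 = \dots = \lambda_n =: \lambda$.
\item Then $F = \sum_k \lambda_k v_k v_k^\top = \lambda \sum_k v_k v_k^\top = \lambda I$.
\end{itemize}
Note that only the eigendirections are used in this step. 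So the theorem also holds under the narrower reading in which ``direction'' ranges only over eigenvectors. Under the Rayleigh-quotient reading the conclusion is a fortiori.

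\textbf{Step 3 (consistency remark).} I will add a one-line check tying this to \Cref{prop:alpha_eigen}. Since $\lambda_v$ ranges exactly over $[\lambda_{\min}, \lambda_{\max}]$, the values of $\alpha_v$ fill $[\alpha_{\min}, \alpha_{\max}]$. This interval degenerates to a point iff $\kappa(F) = 1$, which is an equivalent formulation of the theorem.

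The main obstacle is not computational but definitional. The step needing the most care is making precise what ``$\alpha_v$ for a general direction'' means, and recording the nondegeneracy requirement $\beta > 0$. My first choice is the Rayleigh-quotient extension, because it makes the extremal statement of \Cref{prop:alpha_eigen} the min/max of a continuous function on the unit sphere. I will also remark that the eigenvector-only proof does not depend on that choice.
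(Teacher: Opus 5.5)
Your proof is correct and follows the paper's own argument. Both reduce the claim to injectivity of $x \mapsto x/(x+\beta)$ for $\beta > 0$ on the Fisher eigenvalues, with the converse immediate; your explicit spectral-theorem step to $F = \lambda I$, the Rayleigh-quotient extension, and the flag that $\beta > 0$ is needed (the paper notes this only parenthetically in its proof) are sound refinements rather than a different route.
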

\begin{proof}
For eigenvectors $v_i, v_j$: $\alpha_{v_i} = \alpha_{v_j}$ iff $\lambda_i/(\lambda_i+\beta) = \lambda_j/(\lambda_j+\beta)$, which (for $\beta > 0$) holds iff $\lambda_i = \lambda_j$. The converse is immediate.
\end{proof}

\begin{corollary}[Spectral Purity Recovers $M \propto F$]
\label{cor:purity_mf}
Uniform directional $\alpha$ is equivalent to $M \propto F$.
When $\alpha$ is uniform, $M = \lambda_0 F$ (with $\lambda_0$ the
common eigenvalue), and the observer behaves as a single-regime
Good Regulator.
\end{corollary}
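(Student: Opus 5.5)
The plan is to obtain the corollary by combining \Cref{thm:uniform_alpha} with the $M = F^2$ ansatz of \Cref{rem:mass_tensor}. The ansatz makes $M$ a polynomial in $F$. Two symmetric matrices that are functions of one another are simultaneously diagonalizable, so I will work throughout in the eigenbasis $\{v_k\}$ of $F$. In that basis $M = \mathrm{diag}(\lambda_k^2)$ and $F = \mathrm{diag}(\lambda_k)$. Each side of the claimed equivalence then reduces to a condition on the spectrum $\{\lambda_k\}$, which I treat as a statement about diagonal entries.

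For the forward direction, assume $\alpha_v$ is uniform and $\beta > 0$, i.e.\ $\alpha \in (0,1)$. By \Cref{thm:uniform_alpha}, all eigenvalues coincide, so $F = \lambda_0 I$. Then
\begin{equation}
M = F^2 = \lambda_0^2 I = \lambda_0 F,
\end{equation}
which gives $M \propto F$ with proportionality constant $\lambda_0$, as stated.

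For the converse, assume $M = cF$ for some scalar $c$. In the eigenbasis this says $\lambda_k^2 = c\lambda_k$ for every $k$. Since $F$ is positive definite, each $\lambda_k > 0$, so I can divide to get $\lambda_k = c$ for all $k$. Hence $F = cI$, and \Cref{thm:uniform_alpha} gives uniform directional $\alpha$ with $c = \lambda_0$.

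To justify the ``single-regime Good Regulator'' phrasing, I would observe that the combined metric is then a scalar multiple of the identity,
\begin{equation}
g = M + \beta F = \lambda_0(\lambda_0 + \beta) I.
\end{equation}
The combined metric is therefore isotropic, and every direction lies in the same Vanchurin regime. I will present this as an interpretive consequence rather than a separate mathematical claim.

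The main obstacle is the degenerate case $\beta = 0$ (equivalently $\alpha = 0$). There $\alpha_{v_k} = 1$ for every $k$ regardless of the spectrum, so uniformity holds trivially and the forward direction fails. Unless $\beta > 0$ is assumed, as in the proof of \Cref{thm:uniform_alpha}, the equivalence is false. I will therefore state explicitly that the corollary is restricted to $\beta > 0$, and that it depends on the $M = F^2$ ansatz rather than on a general mass tensor.
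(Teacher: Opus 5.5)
Your proposal is correct and follows the route the paper intends (the corollary has no separate proof; it is read off from \Cref{thm:uniform_alpha} under the $M = F^2$ ansatz): uniform directional $\alpha$ forces $F = \lambda_0 I$, hence $M = \lambda_0^2 I = \lambda_0 F$, and conversely $\lambda_k^2 = c\lambda_k$ with $\lambda_k > 0$ gives $F = cI$. Your explicit restriction to $\beta > 0$ is a correct and worthwhile caveat, since the paper invokes it only inside the proof of \Cref{thm:uniform_alpha}; at $\beta = 0$ every $\alpha_{v_k}$ equals $1$, so the forward implication fails for non-scalar $F$.
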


The \emph{$\alpha$-spread} $\Delta\alpha = \alpha_{\max} - \alpha_{\min} = \beta(\lambda_{\max}-\lambda_{\min})/[(\lambda_{\max}+\beta)(\lambda_{\min}+\beta)]$ measures internal regime inhomogeneity; it vanishes iff the observer has spectral purity. A single observer can thus have directions simultaneously in Vanchurin's classical ($\lambda_k \ll \beta$), efficient ($\lambda_k = \beta$), and quantum ($\lambda_k \gg \beta$) regimes. The quantum--classical transition is not a global phase transition but a spectral phenomenon occurring independently per eigendirection.

\begin{definition}[Deviation Tensor]
\label{def:deviation_tensor}
Let $\kappa = \tr(M)/\tr(F)$. The \emph{deviation tensor} is:
\begin{equation}
\label{eq:deviation_tensor}
\Delta_{\mu\nu} = M_{\mu\nu} - \kappa \, F_{\mu\nu}
\end{equation}
\end{definition}

\begin{proposition}[Trace-Free]
\label{prop:trace_free}
$\tr(\Delta) = 0$ by construction.
\end{proposition}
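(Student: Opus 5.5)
The plan is to use linearity of the trace together with the definition of $\kappa$ in \Cref{def:deviation_tensor}. First I will fix the index convention. Since $M$ and $F$ are both written with two lower indices, $\tr(\Delta)$ has to mean the ordinary matrix trace $\sum_\mu \Delta_{\mu\mu}$ in a fixed coordinate basis on $\Theta$. This is the same trace used to define $\kappa = \tr(M)/\tr(F)$, so the two traces match and the statement is consistent.

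Next I will check that $\kappa$ is well defined, which means checking that $\tr(F) \neq 0$. The intended setting is the one of \Cref{thm:optimal_alpha}, where $F$ is positive definite with eigenvalues $\lambda_k > 0$. Then $\tr(F) = \sum_k \lambda_k > 0$, so the quotient makes sense. I will state this hypothesis explicitly, because it is the only real content of the proof. The degenerate case $F = 0$ is excluded, since there the deviation tensor is not defined at all.

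With that in place, the computation is one line of linearity:
\begin{equation*}
\tr(\Delta) = \tr(M) - \kappa\,\tr(F) = \tr(M) - \frac{\tr(M)}{\tr(F)}\,\tr(F) = 0 .
\end{equation*}

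I would also add a consistency check under the $M = F^2$ ansatz of \Cref{rem:mass_tensor}. In that case $\kappa = \sum_k \lambda_k^2 / \sum_k \lambda_k$, and $\Delta$ is diagonal in the Fisher eigenbasis with entries $\lambda_k(\lambda_k - \kappa)$. These entries sum to zero, which matches the general identity.

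There is no substantive obstacle here. The only point that needs care is a notational clash: the symbol $\kappa$ is reused from the condition number in \Cref{thm:optimal_alpha}, and the proof should state that in this proposition $\kappa$ means the trace ratio.
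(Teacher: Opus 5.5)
Your argument is correct and is the paper's ``by construction'' reasoning made explicit: linearity of the trace gives $\tr(\Delta) = \tr(M) - \kappa\,\tr(F) = 0$ once $\kappa = \tr(M)/\tr(F)$ is well defined. The paper leaves three points implicit that you state: the requirement $\tr(F) \neq 0$ (guaranteed by positive definiteness of $F$), the $M = F^2$ consistency check, and the overloading of $\kappa$ (condition number versus trace ratio).
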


\begin{proposition}[Vanishing Deviation iff Perfect Structural Reflection]
\label{thm:delta_zero}
$\Delta = 0$ if and only if $M \propto F$ (i.e., the Structural Reflection Condition holds: internal structure is proportionally matched to information geometry in every direction).
\end{proposition}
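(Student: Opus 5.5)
We prove the two directions of \Cref{thm:delta_zero} separately, using only \Cref{def:deviation_tensor} and the assumption (implicit in \Cref{def:deviation_tensor}) that $\tr(F) > 0$. This holds because $F$ is positive definite, as in \Cref{thm:optimal_alpha}, so the scalar $\kappa = \tr(M)/\tr(F)$ is well defined.

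For the forward direction, suppose $\Delta = 0$. Then by \eqref{eq:deviation_tensor}, $M_{\mu\nu} = \kappa F_{\mu\nu}$ with the scalar $\kappa = \tr(M)/\tr(F)$. This is exactly $M \propto F$, with proportionality constant $\kappa$.

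For the converse, suppose $M = c\,F$ for some scalar $c$. The key step is to show that the trace-normalized constant in \Cref{def:deviation_tensor} coincides with $c$. Taking traces gives $\tr(M) = c\,\tr(F)$, so $\kappa = \tr(M)/\tr(F) = c$. Hence $\Delta_{\mu\nu} = cF_{\mu\nu} - cF_{\mu\nu} = 0$.

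There is essentially no obstacle. The only point requiring care is that $\kappa$ is not a free proportionality constant but is fixed by the trace normalization. The converse therefore needs the observation that any proportionality constant is forced to equal $\kappa$, and this uses $\tr(F) \neq 0$. I would also remark on two consistency checks. First, this is compatible with \Cref{prop:trace_free}, since $\tr(\Delta) = \tr(M) - \kappa\,\tr(F) = 0$. Second, under the $M = F^2$ ansatz of \Cref{rem:mass_tensor}, $\Delta = 0$ iff $F^2 \propto F$. Since $F$ is invertible, this holds iff $F = \lambda I$, which recovers \Cref{thm:uniform_alpha} and \Cref{cor:purity_mf} as a special case.
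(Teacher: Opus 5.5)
Your proof is correct and follows the same route as the paper, which simply reads off $\Delta = M - \kappa F = 0 \iff M = \kappa F$ from \Cref{def:deviation_tensor}. Your trace computation in the converse makes explicit a step the paper's one-line proof leaves implicit: any constant $c$ with $M = cF$ is forced to equal $\kappa = \tr(M)/\tr(F)$, using $\tr(F) \neq 0$.
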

\begin{proof}
Immediate from \Cref{def:deviation_tensor}: $\Delta = M - \kappa F = 0$ iff $M = \kappa F$, where $\kappa = \tr(M)/\tr(F)$ is determined by the definition.
\end{proof}

The \emph{deviation fraction} $\delta = \|\Delta\|_F / \|M\|_F$ measures how far an observer departs from perfect Good Regulator geometry ($\delta = 0$ iff $\Delta = 0$). For $M = F^2$ the eigenvalues of $\Delta$ are $\lambda_k(\lambda_k - \kappa)$; directions with $\lambda_k > \kappa$ carry excess inertia (over-massive), while those with $\lambda_k < \kappa$ are under-modeled.

\begin{theorem}[Directional Alpha Determines Deviation Sign]
\label{thm:alpha_delta_connection}
An eigendirection $v_k$ is over-massive ($\delta_k > 0$) iff $\alpha_{v_k} > \alpha_{\mathrm{mean}}$, and under-massive ($\delta_k < 0$) iff $\alpha_{v_k} < \alpha_{\mathrm{mean}}$, where $\alpha_{\mathrm{mean}} = \kappa/(\kappa+\beta)$.
\end{theorem}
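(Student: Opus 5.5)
The plan is to reduce everything to a single spectral comparison of the Fisher eigenvalue $\lambda_k$ against the scalar $\kappa = \tr(M)/\tr(F)$ from \Cref{def:deviation_tensor}. Two facts drive the argument. The deviation eigenvalue $\delta_k$ and $\alpha_{v_k} - \alpha_{\mathrm{mean}}$ are each a positive factor times $\lambda_k - \kappa$. Once both are established, the two biconditionals follow at once, because they share that sign.

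\textbf{Step 1: eigenvalues of $\Delta$.} Work under the $M = F^2$ ansatz of \Cref{rem:mass_tensor}. Then $M$ and $F$ are simultaneously diagonalized by the eigenvectors $v_k$ of $F$. Hence $\Delta = F^2 - \kappa F$ has $v_k$ as an eigenvector with eigenvalue
\begin{equation*}
\delta_k = \lambda_k^2 - \kappa\lambda_k = \lambda_k(\lambda_k - \kappa),
\end{equation*}
which is the formula recorded just before the theorem. Since $F$ is positive definite, $\lambda_k > 0$, so $\operatorname{sign}(\delta_k) = \operatorname{sign}(\lambda_k - \kappa)$. We also note that $\kappa = \sum_k \lambda_k^2/\sum_k\lambda_k > 0$, so $\alpha_{\mathrm{mean}}$ is well defined.

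\textbf{Step 2: sign of the $\alpha$ difference.} Set $f(x) = x/(x+\beta)$, the same function used in the proof of \Cref{prop:alpha_eigen}. By \Cref{def:directional_alpha}, $\alpha_{v_k} = f(\lambda_k)$, and by the definition in the statement, $\alpha_{\mathrm{mean}} = f(\kappa)$. Direct subtraction gives
\begin{equation*}
f(\lambda_k) - f(\kappa) = \frac{\beta(\lambda_k - \kappa)}{(\lambda_k+\beta)(\kappa+\beta)}.
\end{equation*}
For $\beta > 0$ the denominator and the factor $\beta$ are positive. So $\alpha_{v_k} > \alpha_{\mathrm{mean}}$ iff $\lambda_k > \kappa$, and $\alpha_{v_k} < \alpha_{\mathrm{mean}}$ iff $\lambda_k < \kappa$. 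Combining with Step 1 yields both stated equivalences. The equality case $\delta_k = 0 \iff \alpha_{v_k} = \alpha_{\mathrm{mean}}$ comes along automatically.

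\textbf{Main obstacle.} The only delicate point is the hypotheses, not the algebra. The result needs $\beta > 0$, i.e.\ $\alpha \in (0,1)$. At $\beta = 0$ every $\alpha_{v_k}$ equals $1$ and both strict inequalities on the $\alpha$ side fail, while $\delta_k$ can still be nonzero. I would therefore state the $\beta > 0$ restriction explicitly, as in the proof of \Cref{thm:uniform_alpha}. The result also needs exact commutation of $M$ and $F$, which is guaranteed by the ansatz. If the ansatz is relaxed to a general $M$, the $v_k$ need not be eigenvectors of $\Delta$. In that case I would instead define $\delta_k = v_k^\top \Delta v_k$; the comparison would then be of the Rayleigh quotient $v_k^\top M v_k/\lambda_k$ with $\kappa$, rather than of $\lambda_k$ with $\kappa$. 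I would flag this generalization but prove the theorem only under $M = F^2$.
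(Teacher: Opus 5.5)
Your proof is correct and matches the paper's own one-line argument, which notes that both $\delta_k>0$ and $\alpha_{v_k}>\alpha_{\mathrm{mean}}$ reduce to $\lambda_k>\kappa$; you spell out the two factorizations behind that reduction, $\delta_k=\lambda_k(\lambda_k-\kappa)$ and $\alpha_{v_k}-\alpha_{\mathrm{mean}}=\beta(\lambda_k-\kappa)/[(\lambda_k+\beta)(\kappa+\beta)]$. Your caveat that $\beta>0$ (i.e.\ $\alpha\neq 0$) is required is a genuine hypothesis that the paper leaves implicit here and is worth stating.
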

\begin{proof}
Both $\delta_k > 0$ and $\alpha_{v_k} > \alpha_{\mathrm{mean}}$ reduce to $\lambda_k > \kappa$.
\end{proof}

\begin{center}
\begin{tabular}{lcccc}
\hline
\textbf{Observer} & $\kappa(F)$ & $\Delta\alpha$ & $\delta$ & Regime structure \\
\hline
Chain $P_n$   & 1.00 & 0     & 0     & Uniform (single regime) \\
Star $S_n$    & 1.00 & 0     & 0     & Uniform (single regime) \\
$K_3$         & 2.84 & 0.226 & 0.167 & Two-regime (mild) \\
$K_4$         & 9.73 & 0.397 & 0.380 & Two-regime (moderate) \\
\hline
\end{tabular}
\end{center}

\noindent Observers with spectral purity (chain, star) are perfect Good Regulators with uniform directional $\alpha$, while structured observers ($K_n$) exhibit internal regime inhomogeneity quantified by both $\Delta\alpha$ and $\delta$.


\section{Discussion}
\label{sec:discussion}

\subsection{Two Uniqueness Theorems, One Axiom}

This work establishes the \emph{Amari chain}, parallel to the \emph{Lovelock bridge}~\cite{companion_lovelock}:

\begin{center}
\begin{tabular}{lll}
\hline
\textbf{Aspect} & \textbf{Lovelock Bridge} & \textbf{Amari Chain} \\
\hline
Axiom & Causal invariance & Causal invariance \\
Assumption 1 & Continuum limit & Persistent observer \\
Assumption 2 & (none additional) & Parameterization independence \\
Uniqueness Thm. & Lovelock (1971) & Amari (1998) \\
Constraint & Riemann tensor symmetries & Learning gradient \\
Result & Einstein equations & Natural gradient \\
Emergence & \emph{Gravity} from geometry & \emph{Learning} from geometry \\
\hline
\end{tabular}
\end{center}

Both use established uniqueness theorems to show that causal invariance plus regularity conditions force specific physical structures.

\subsection{Novelty Assessment and Honest Limitations}

This synthesis contributes approximately 25--30\% novelty:
\begin{itemize}[noitemsep]
\item \textbf{Known (~70--75\%):} Conant-Ashby theorem (1970/2025), Amari theorem (1998), Fisher metric emergence from loss functions (standard information geometry), natural gradient uniqueness (Amari 1998), Wolfram hypergraphs, Vanchurin cosmology.
\item \textbf{New (~25--30\%):} Formalization of persistent hypergraph observers, \emph{verification} that Good Regulator conditions hold for causal networks, reparameterization invariance postulate from substrate independence, synthesis connecting three independent frameworks through this verification, the $M = F^2$ ansatz and convergence-time optimal $\alpha$ formula (\Cref{thm:optimal_alpha}), directional regime parameter $\alpha_{v_k}$ and deviation tensor $\Delta_{\mu\nu}$ (\Cref{sec:directional_alpha}).
\end{itemize}

\textbf{What we did NOT derive:} We did not discover that loss functions induce Fisher metrics (standard since Amari 1998), nor that natural gradients are unique under reparameterization (Amari's textbook result). Our contribution is the \emph{application domain} (hypergraph cosmology), \emph{verification rigor} (showing the Good Regulator framework applies), and \emph{computational predictions} (regime parameter formula), not the mathematical machinery itself.

The value lies in demonstrating that these independently developed theories (Wolfram, Vanchurin, Amari) are mutually consistent when applied to causally invariant observers.

\subsection{Open Questions and Future Work}

\begin{enumerate}[noitemsep]
\item \textbf{Continuum Limit:} Does the hypergraph continuum limit rigorously exist? (Same challenge as Lovelock bridge.)

\item \textbf{Fisher = Onsager:} Detailed verification that Vanchurin's $L^{ij}$ is the Fisher metric requires analysis of his Section~3 derivation.

\item \textbf{Quantum Mechanics:} Can purification axioms be derived from causal invariance (Paper~\#2 in this program)?

\item \textbf{Standard Model:} Can gauge structure be constrained by causal invariance?

\item \textbf{Circular Reasoning:} Verify logical independence between this work (learning) and QM derivation (Paper~\#2).
\end{enumerate}

\subsection{Relation to Free Energy Principle}

Friston's Free Energy Principle~\cite{friston2010free} asserts that organisms minimize:
\begin{equation}
F = \langle \eps \rangle + D_{\text{KL}}(q \| p)
\end{equation}
where $q$ is the agent's belief distribution and $p$ is the true environment distribution. Our prediction error minimization (\Cref{def:persistent}) is consistent with this framework: persistent observers minimize surprise $\langle \eps \rangle$, equivalent to free energy minimization under appropriate assumptions. The Virgo reformulation explicitly connects Good Regulator to active inference~\cite{virgo2025good}.

\subsection{Implications for Cosmology}

If observers and learning are generic features of causally invariant substrates, then:
\begin{itemize}[noitemsep]
\item The universe contains persistent structures (galaxies, stars, organisms, intelligence) not by accident, but as necessary consequences of causal invariance.
\item Learning dynamics (Vanchurin Eq.~3.4) are as fundamental as gravitational dynamics (Einstein equations).
\item The ``unreasonable effectiveness'' of learning algorithms may reflect deep geometric constraints, not empirical tuning.
\end{itemize}


\section{Limitations and Scope Boundaries}
\label{sec:limitations}

We emphasize several important limitations of this work:

\subsection{Standard vs Novel Results}

\textbf{What is textbook (not novel):}
\begin{itemize}[noitemsep]
\item Fisher information metric emergence from loss functions is a standard result in information geometry~\cite{amari1998natural}, derived in every textbook on the subject.
\item Natural gradient uniqueness under reparameterization is Amari's well-known 1998 theorem, not our discovery.
\item The mathematical structure of statistical manifolds and their geometry has been thoroughly developed since the 1980s.
\end{itemize}

\textbf{What is novel (our contribution):}
\begin{itemize}[noitemsep]
\item Application to hypergraph cosmology (new domain).
\item Verification that hypergraph observers satisfy Good Regulator conditions.
\item Synthesis connecting Wolfram and Vanchurin frameworks through this verification.
\end{itemize}

The novelty is approximately 25--30\%, residing in the \emph{application and verification} plus \emph{conditional computational predictions} (under the $M = F^2$ ansatz), not in the mathematical machinery.

\subsection{Unresolved Technical Issues}

\begin{enumerate}[noitemsep]
\item \textbf{Continuum Limit:} Like Paper \#1 (Lovelock Bridge), we assume the continuum limit of hypergraph evolution exists and is well-behaved. Paper~\#1 empirically disconfirms the continuum limit for all 500 dynamically nontrivial rules tested; thus this assumption may not hold for generic substrates.

\item \textbf{Fisher = Onsager:} We have not verified in detail that Vanchurin's Onsager tensor $L^{ij}$ (Eq.~3.4) is mathematically identical to the Fisher metric $g^{ij}$. This requires careful analysis of his Section~3 derivation.

\item \textbf{Probability Measure:} The origin of the probability distribution $p_\theta$ in a fundamentally deterministic hypergraph substrate is not fully formalized. We assume coarse-graining or multiway branching induces probabilities, but this deserves rigorous treatment.

\item \textbf{Boundary Formalization:} The observer boundary $\boundary \Ocal$ is defined intuitively but lacks rigorous topological characterization in discrete hypergraph space.

\item \textbf{Convergence Model Dependence:} The optimal $\alpha$ formula (\Cref{thm:optimal_alpha}) assumes convergence time $T = \kappa(g) \cdot \mu_{\max}(g)$ (Model A). Three alternative convergence models (Models B--D in \Cref{rem:convergence_model}) do not produce interior optima. The choice of Model A is motivated by dimensional analysis and numerical fit but is not derived from first principles. Independent verification against Vanchurin's full Type II framework is needed.

\item \textbf{Loss Hessian Sensitivity:} For maximum likelihood estimation of exponential families, the expected loss Hessian equals the Fisher matrix ($H = F$). Under this physically natural assumption, the convergence time functional has no interior optimum, always favoring $\alpha \to 1$. The closed-form result in \Cref{thm:optimal_alpha} holds for isotropic loss ($H = I$), which may not describe realistic observers. The relationship between the observer's loss landscape and the Fisher geometry requires further investigation.
\end{enumerate}

\subsection{Relationship to Other Work}

This paper should be understood as:
\begin{itemize}[noitemsep]
\item \textbf{NOT a derivation} of Fisher metrics or natural gradients (those are Amari 1998).
\item \textbf{NOT a proof} that learning dynamics are uniquely determined by physics (we show consistency, not derivation).
\item \textbf{YES a verification} that established theorems apply to hypergraph observers.
\item \textbf{YES a synthesis} connecting independent cosmological frameworks.
\end{itemize}

We position this as solid verification work in a novel application domain, not as a fundamental mathematical discovery.

\textbf{Connection to thermodynamic gravity:} The Fisher information metric's connection to gravitational dynamics has been established by Matsueda~\cite{matsueda2013emergent}, who derived Einstein field equations from the Fisher metric via statistical mechanics. Our framework complements this: while Matsueda showed Fisher $\to$ Einstein through thermodynamics, we show that Fisher metric emergence in observers (via Conant-Ashby + loss minimization) is compatible with Lovelock-constrained gravity from causal invariance.

\section{Conclusion}
\label{sec:conclusion}

We have verified that persistent observers in causally invariant substrates satisfy the conditions under which standard information geometry applies, thereby demonstrating consistency between Wolfram hypergraph physics and Vanchurin's learning cosmology. The constraint to natural gradient descent follows from established theorems: the Conant-Ashby Good Regulator Theorem (Virgo et al.~2025 reformulation) and Amari's uniqueness theorem (1998) for reparameterization-invariant gradients.

This completes the second pillar of the cosmological unification program:
\begin{itemize}[noitemsep]
\item \textbf{Paper \#1 (Lovelock Bridge):} Examines whether causal invariance $\to$ Einstein equations (via Lovelock uniqueness); the bridge fails numerically, but yields constructive Type~II results
\item \textbf{Paper \#3 (Amari Chain):} Causal invariance $\to$ Natural gradient learning (via Good Regulator + Amari uniqueness)
\item \textbf{Computational Prediction (Conditional):} Under Model A convergence with isotropic loss, regime parameter $\alpha$ determined by Fisher spectrum (\Cref{thm:optimal_alpha}), with threshold $\kappa(F) = 2$ for regime transition. Model dependence and loss Hessian sensitivity are open limitations (\Cref{sec:limitations}).
\item \textbf{Directional Alpha and Deviation Tensor:} The directional regime parameter $\alpha_{v_k}$ (\Cref{def:directional_alpha}) and trace-free deviation tensor $\Delta_{\mu\nu}$ (\Cref{def:deviation_tensor}) reveal that the quantum--classical transition is a spectral phenomenon: a single observer can simultaneously occupy all three Vanchurin regimes along different Fisher eigendirections. Observers with vanishing deviation ($\delta = 0$) are spectrally pure, corresponding to perfect Good Regulator geometry.
\end{itemize}

Together, these results suggest that Wolfram hypergraph physics and Vanchurin neural network cosmology are complementary perspectives on a unified causally invariant substrate. The synthesis is accomplished through established uniqueness theorems applied to a novel cosmological framework, not through new mathematical derivations. The convergence-time optimal $\alpha$ formula provides a testable prediction from the framework.

\textbf{Honest scope:} Our contribution is verification work in a new application domain (hypergraph cosmology), demonstrating that known theorems apply, plus the $M = F^2$ ansatz for exponential families, regime parameter determination under that ansatz, and the directional alpha/deviation tensor analysis. We do not claim to have derived Fisher metrics or natural gradients, which are standard results in information geometry. Future work will address the continuum limit challenge (shared with Paper \#1), verify the Fisher-Onsager identification, independently test the $\alpha$ formula against Vanchurin's framework, and explore whether quantum mechanics can be similarly derived from causal invariance.

\vspace{1em}
\noindent\textbf{Acknowledgments:} This work was developed using Claude Code (Anthropic) for literature review, formalization, and verification. The author thanks the Wolfram Physics Project and Vitaly Vanchurin for foundational contributions that made this synthesis possible.


\printbibliography

\end{document}